\newtheorem{Theo}{Theorem}
\theoremstyle{definition}
\begin{document}
	\title{Three-Dimensional Spectrum Occupancy Measurement using UAV: Performance Analysis and Algorithm Design}
	\author{Zhiqing Wei,
		Rubing Yao,
		Jie Kang,
		Xu Chen,
		and Huici Wu
		\thanks{Zhiqing Wei, Rubing Yao and Xu Chen are with Key Laboratory of Universal Wireless Communications, Ministry of Education, School of Information and Communication Engineering, Beijing University of Posts and Telecommunications (BUPT), Beijing, 100876, China (e-mail: weizhiqing@bupt.edu.cn, yaoru0422@163.com, chenxu96330@bupt.edu.cn).}
		\thanks{Jie Kang is with China Unicom, Beijing, 100031, China (e-mail: kangj56@chinaunicom.cn).}
		\thanks{Huici Wu is with the National Engineering Lab for Mobile Network Technologies, Beijing University of Posts and Telecommunications (BUPT), Beijing, 100876, China, and also with Peng Cheng Laboratory, Shenzhen, China. (e-mail: dailywu@bupt.edu.cn).}}
\maketitle	
\begin{abstract}
	   Spectrum sharing, as an approach to significantly improve spectrum efficiency in the era of 6th generation mobile networks (6G), has attracted extensive attention. Radio Environment Map (REM) based low-complexity spectrum sharing is widely studied where the spectrum occupancy measurement (SOM) is vital to construct REM. The SOM in three-dimensional (3D) space is becoming increasingly essential to support the spectrum sharing with space-air-ground integrated network being a great momentum of 6G. In this paper, we analyze the performance of 3D SOM to further study the tradeoff between accuracy and efficiency in 3D SOM. We discover that the error of 3D SOM is related with the area of the boundary surfaces of licensed networks, the number of discretized cubes, and the length of the edge of 3D space. Moreover, we design a fast and accurate 3D SOM algorithm that utilizes unmanned aerial vehicle (UAV) to measure the spectrum occupancy considering the path planning of UAV, which improves the measurement efficiency by requiring less measurement time and flight time of the UAV for satisfactory performance. The theoretical results obtained in this paper reveal the  essential dependencies that describe the 3D SOM methodology, and the proposed algorithm is beneficial to improve the efficiency of 3D SOM. It is noted that the theoretical results and algorithm in this paper may provide a guideline for more areas such as spectrum monitoring, spectrum measurement, network measurement, planning, etc. 
\end{abstract}
					
\begin{IEEEkeywords}
		Spectrum Sharing; Radio Environment Map; 3D Spectrum Occupancy Measurement; Unmanned Aerial Vehicle; Path Planning; Spectrum Monitoring; Spectrum Measurement
\end{IEEEkeywords}

		\section{Introduction}
		\label{sec:introduction}
		\subsection{Background and Motivations}
		
		\IEEEPARstart{W}{ith} the launch of the world's first 6G White Paper in 2019 \cite{ref1}, the research on 6G networks has been initiated. 6G will bring a series of new applications such as Internet of everything (IoE), holographic telepresence, extended reality (XR) and so on\cite{ref2}, which will profoundly change the human society of 2030s and beyond. In order to satisfy the requirements of these application scenarios, 6G should have larger system capacity, higher data rate, higher spectrum efficiency and so on. However, spectrum resources are increasingly scarce with the increasing demand. The International Telecommunication Union (ITU) has warned that the growing use of mobile broadband will lead to global spectrum congestion \cite{ref3}. Spectrum sharing technology enables unauthorized users to utilize the spectrum that is not fully utilized by licensed users in time and space dimensions, which will significantly improve the spectrum efficiency \cite{ref3}.
		
		To reduce the resource scheduling complexity of spectrum sharing, the Radio Environment Map (REM) based spectrum sharing was proposed and widely studied \cite{ref4}. The REM was introduced to cooperatively collect, store and share information among the network users regarding the spectrum occupancy, location information of licensed users, spectrum management rules \cite{ref5} \cite{ref6}, interference management methods\cite{ref7} and so on, among which the spectrum occupancy information is most critical for the spectrum sharing. With the space-air-ground integrated network being one of the key features of 6G \cite{ref8}, spectrum occupancy measurement (SOM) in three-dimensional (3D) space is becoming increasingly significant to provide valuable information to regulators about the efficiency of the current use of the spectrum resources \cite{ref9}. However, the SOM faces several challenges in the era of 6G. Firstly, base station deployment schemes are diverse due to different scenario requirements for the network devices related to their specific functionality, causing complex 3D spectrum environments. Thus, it is difficult to perform 3D SOM because of nonlinearity and nonstationary of 3D signal propagation models. Secondly, there exists complex tradeoff between accuracy and efficiency in the 3D SOM. Hence, we need to design the 3D SOM scheme with high accuracy and efficiency.
		
		\subsection{Related Works}
		The statistical models of spectrum occupancy was summarized in \cite{ref10}. The Markov chain is a very natural choice for statistical modeling of the spectrum occupancy, as the spectrum occupancy rate is either 0 or 1 after energy detection and the occupancy status changes between these two cases \cite{ref11}. Indeed, the Markov chain provides an accurate description of the time dimension of the spectrum occupancy while seldom considering the frequency dimension and the space dimension. In \cite{ref12}, Lopez-Benitez \textit{et al.} proposed a space dimension model of the spectrum occupancy for the duty cycle as a function of various parameters, such as the probability of false alarm, the activity factor, and so on. These models depict the statistical behaviors of the spectrum occupancy. However, they did not provide the measurement of the spectrum occupancy. The accurate spectrum measurement is of great value in practical applications, which is the theme of this paper.
		
		The results from the SOM around the world showed significant spectrum opportunities or low spectrum occupancy in frequencies above 1 GHz \cite{ref13}. There are some literatures that attempt to deal with the challenges of 3D SOM in Section I-A. As to the 3D SOM, a method of cooperative SOM was designed in \cite{ref14}, which could improve the reliability of SOM by integrating measurement results from several locations. A combined approach employing low complexity array processing and conventional energy detection toward spectrum occupancy was proposed in \cite{ref15}. In \cite{ref16}, Ivanov \textit{et al.} applied experimental equipment to measure spectrum occupancy in dense indoor environment. In \cite{ref17}, Ayg\"ul \textit{et al.} proposed a 3D spectrum occupancy prediction method based on composite two-dimensional (2D) long-term short-term (LTST) memory model, which revealed a high detection performance with low complexity and high robustness. Recently, unmanned aerial vehicles (UAVs) are widely applied in both civil and military fields, which are promising in 3D SOM \cite{ref18}. Firstly, the UAV has the advantages of high flexibility and maneuverability. Thus, it is possible to apply UAV in 3D SOM with high degrees of freedom \cite{ref19}. In addition, most of the measurements are performed at high-altitude outdoor so as to get the accurate estimation of the licensed user's activity in various spectrum bands. Therefore, the high-altitude UAV communication platforms having highly reliable Line of Sight (LoS) transmission \cite{ref20} is applied in 3D SOM. In \cite{ref21}, Al-Hourani installed a software-defined receiver on UAV for SOM. To achieve the tradeoff between accuracy and efficiency in SOM, Faint \textit{et al.} \cite{ref22} studied this tradeoff relation, and showed that increasing the number of measurements can improve the accuracy. However, when the number of measurements is large enough, the improvement is not significant. In \cite{ref23}, we revealed the tradeoff relation between the accuracy and the number of measurements in 2D SOM. However, the relation depends on the specific distribution of licensed networks.
		
		All the above works are limited in the following aspects. Firstly, the existing literatures focused on the measurement accuracy and complexity. However, the efficiency of measurement has not attracted wide attention. Secondly, the tradeoff between accuracy and efficiency in 2D SOM was studied. However, the tradeoff in 3D SOM has attracted limited attention. Finally, the 3D SOM algorithms are rarely studied. 
		\subsection{Our Contributions}
		Considering these limitations, the performance of 3D SOM is analyzed in this paper. Besides, a fast and accurate SOM algorithm using UAV is designed. The main contributions of this paper are as follows.
		
		\begin{itemize}
			\item[1.] We obtain the relation between the error of 3D SOM $P_e$ and the number of measurements $M$, i.e.,  ${P_e} = \Theta (\frac{1}{{\sqrt[3]{M}}})$, which is a decreasing and convex function related with the area of the boundary surfaces of licensed networks.
			
			\item[2.] A path planning algorithm of UAV is designed to implement the 3D SOM. This algorithm reduces the number of measurements, shortens the flight distance of UAV, and reduces the energy consumption in UAV-assisted SOM compared with regular measurement algorithm, which is verified theoretically and numerically.
		\end{itemize}
	
		\begin{table}[]
			\caption{\label{tab1}Key parameters and abbreviations}
			\begin{center}
				\begin{tabular}{m{2cm}<{\centering}|m{6cm}<{\centering}}
					\hline
					\hline
					{\textbf{Symbol}} & {\textbf{Description}} \\
					\hline
					SOM & Spectrum Occupancy Measurement\\
					\hline
					REM & Radio Environment Map\\
					\hline
					ACO & Ant Colony Optimization\\
					\hline
					$I_i$ & Radio parameter of cube \#$i$\\
					\hline
					$P_e$ & Radio parameter error (RPE)\\
					\hline
					$P_{e,i}$ & Radio parameter error  of cube \#$i$\\
					\hline
					$M$ & Number of cubes\\
					\hline
					$K$ & Number of cubes with impure radio environment\\
					\hline
					$S$ & Area of all the boundary surfaces of licensed networks\\
					\hline
					$S_i$ & Area of the boundary surface of licensed network in cube \#$i$\\
					\hline
					$L$ & Length of the 3D space's edge\\
					\hline
					$\varepsilon$ & Side length of a cube\\
					\hline
					$N$ & Number of radio parameters\\
					\hline
					$T$ & Number of licensed networks\\
					\hline
					$d$ & Measurement interval\\
					\hline
					$r$ & Number of iterations\\
					\hline
					$D_r$ & Number of measurements \\
					\hline
					\hline
				\end{tabular}
			\end{center}
		\end{table}
		\begin{figure}[t]
	    \centering
	    \includegraphics[width=0.35\textheight]{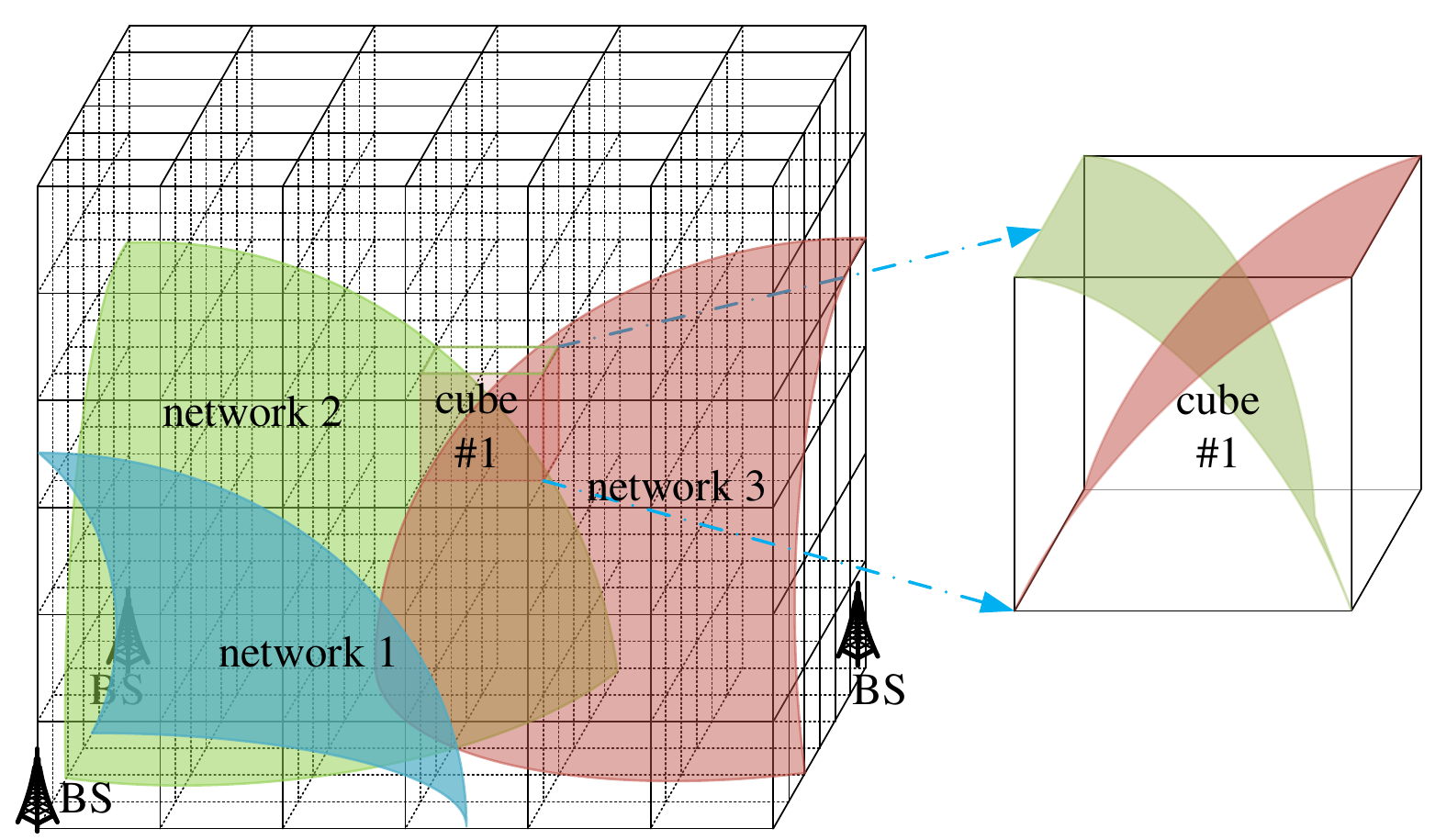}
 	    \DeclareGraphicsExtensions.
	    \caption{The spectrum occupancy information of 3D.}
	    \label{fig 1}
        \end{figure}
		\subsection{Outline of This Paper}
		The remaining parts of this paper are organized as follows. The representation of radio parameters is introduced in Section II. The performance of 3D SOM is derived in Section III. In Section IV, the path planning algorithm of UAV is designed for 3D SOM. Section V presents the numerical results to verify the theoretical results. Finally, Section VI concludes this paper. The key parameters and abbreviations are listed in Table \ref{tab1} .
		\begin{figure*}[!t]
			\centering
			\includegraphics[width=0.70\textheight]{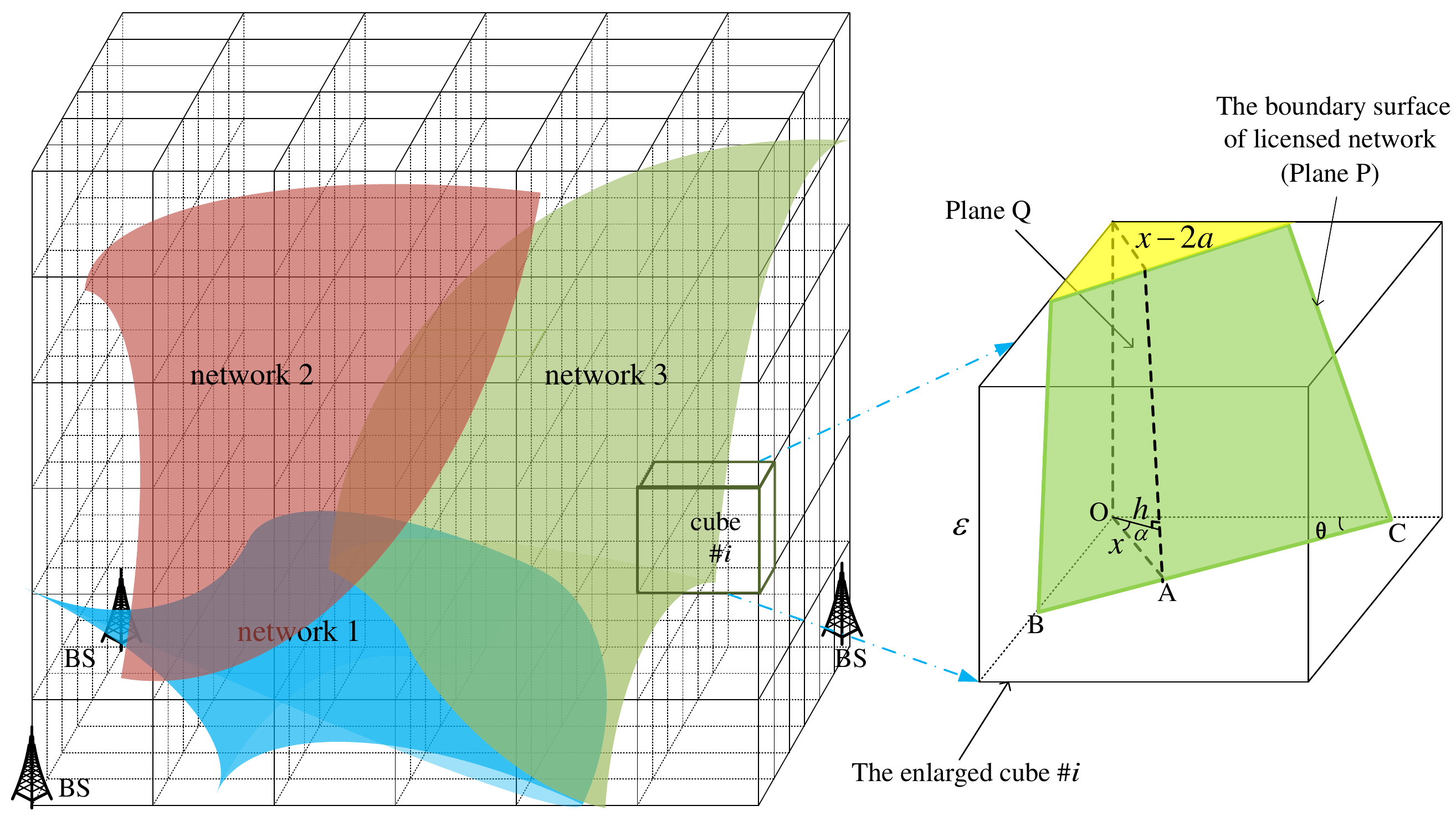}
			\DeclareGraphicsExtensions.
			\caption{The 3D spectrum occupancy.}
			\label{fig 2}
		\end{figure*}	
		
		\section{Representation of Radio Parameters}

		The high-altitude UAV platforms have the highly reliable Line of Sight (LoS) transmission. The ground sensors are traditionally utilized when measuring the spectrum occupancy on the ground, which is not the focus of this paper. Therefore, the role of shadowing or fast fading fluctuations is not taken into account since it is small in the air. As shown in Fig. \ref{fig 1}, there are several licensed networks in a specific space, and the spectrum occupancy of licensed networks needs to be measured. The entire 3D space is divided into cubes, which is the smallest unit of SOM.
		
		The definition of radio parameter follows the previous work \cite{ref23}. Whether a licensed network exists at a certain location is recognized by measuring the signal strength. The binary representation of licensed network $k$ at the location with coordinates $\left(x,y,z\right)$ is
		\begin{equation}
			R(k,x,y,z) = \left\{ \begin{array}{l}
				1\;{\rm if\;network\;}k\;{\rm is\;detected\; at\;}(x,y,z)\\
				0\;{\rm otherwise}
			\end{array} \right.,
			\label{equ 1}
		\end{equation}
		The spectrum occupancy information at a location is defined by the \textit{radio parameter}, which is defined by the sum of the binary representations $R\left(k,x,y,z\right),\forall k$ as follows.
		\begin{equation}
			I\left(x,y,z\right)=\sum_{k=1}^{T}{R\left(k,x,y,z\right)\times2^{k-1}},
			\label{equ 2}
		\end{equation}
		where $T$ is the number of licensed networks, and $N=2^T$ is the number of radio parameters. In the cube \#1 of Fig. \ref{fig 1}, there are 4 radio parameters. The radio parameter of cube \#$i$ is defined as follows \cite{ref23}.
		\begin{equation}
			{I_i} = \mathop {{\mathop{\rm argmax}\nolimits} }\limits_j {p_{ij}},
			\label{equ 3}
		\end{equation}
		where $p_{ij}$ is the fraction of the volume in cube \#$i$ with radio parameter $j$. Without lose of generality, it is assumed that the center of cube \#$i$ undergoes different radio parameters with a uniform distribution.
		
		The radio parameter error (RPE) of cube \#$i$ is thus defined as \cite{ref23}
		\begin{equation}
			{P_{e,i}} = 1 - \mathop {max}\limits_j {p_{ij}},
			\label{equ 4}
		\end{equation}
		and the RPE of the entire 3D space is defined as \cite{ref23}
		\begin{equation}
			P_e=\sum_{i=1}^{M}{\alpha_iP_{e,i}},
			\label{equ 5}
		\end{equation}
		where $M$ is the number of cubes, and $\alpha_i$ is the fraction of the volume of cube \#$i$ in the entire 3D space. If the entire 3D space is evenly divided into small cubes, we have $\alpha_i=\frac{1}{M}$.
		
		The RPE defined above represents the error of the SOM in the entire 3D space.
		
		\section{Performance Analysis of 3D SOM}
		
		In order to explore the factors influencing the accuracy of 3D SOM, we investigate the relation between RPE and the number of cubes in this section. Firstly, we reveal the result considering the irregular shape of licensed networks' coverage in \textbf{Theorem 1}. Then, assuming the parameters related with the shape of licensed networks' coverage follow uniform distribution, we reveal a special case in \textbf{Theorem 2}.
		\begin{Theo}
			If the shape of licensed networks' coverage is irregular, the RPE is as follows.
			\begin{equation}
				{P_e} = \Theta (\frac{1}{{\sqrt[3]{M}}}),
				\label{equ 6}
			\end{equation}
			where $\Theta (*)$ is infinitesimal of the same order.
		\end{Theo}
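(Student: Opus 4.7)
My plan is to exploit the fact that only cubes cut by the boundary surfaces of the licensed networks contribute to $P_e$, and to count such cubes in terms of the side length $\varepsilon = L/\sqrt[3]{M}$. Classify the $M$ cubes into two types. A \emph{pure} cube lies entirely inside one region of constant radio parameter; by (\ref{equ 4}) we have $p_{ij}=1$ for some $j$ and therefore $P_{e,i}=0$. An \emph{impure} cube is one intersected by at least one boundary surface of some licensed network; let $K$ denote the number of such cubes. Only impure cubes contribute to the sum in (\ref{equ 5}).

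Next I would estimate $K$. Each boundary surface is a piecewise-smooth $2$-manifold, and the union of all such surfaces has total area $S=\sum_t S_t$. By a standard box-covering argument, the number of cubes of side $\varepsilon$ that a piecewise-smooth surface of area $S$ meets satisfies
\begin{equation}
K = \Theta\!\left(\frac{S}{\varepsilon^{2}}\right) = \Theta\!\left(\frac{S\,M^{2/3}}{L^{2}}\right).
\end{equation}
Both directions of this estimate are needed: the upper bound comes from covering the surface by $\varepsilon$-cubes, and the lower bound from the fact that a patch of area $\varepsilon^{2}$ cannot be hidden in fewer than $\Theta(1)$ cubes.

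For the per-cube error I would prove matching upper and lower bounds. The upper bound $P_{e,i}\le 1-1/N$ is immediate from (\ref{equ 4}). For the lower bound I would use the irregularity hypothesis: on a positive fraction of the impure cubes the boundary crosses the cube in a non-degenerate way, so the minority region has volume at least $c_{0}\varepsilon^{3}$ for some absolute constant $c_{0}>0$, giving $P_{e,i}\ge c_{0}$. Combining with $\alpha_{i}=1/M$ and (\ref{equ 5}) yields
\begin{equation}
P_e = \frac{1}{M}\sum_{i\,\text{impure}} P_{e,i} = \Theta\!\left(\frac{K}{M}\right) = \Theta\!\left(\frac{S}{L^{2}\,\sqrt[3]{M}}\right),
\end{equation}
and since $S$ and $L$ are geometric constants of the licensed network configuration, this is $\Theta(1/\sqrt[3]{M})$ as claimed.

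The main obstacle is the per-cube lower bound in the third step. The upper bound on $K$ and the trivial upper bound on $P_{e,i}$ already deliver $P_e=O(1/\sqrt[3]{M})$ essentially for free, but to upgrade $O$ to $\Theta$ one must rule out the pathology in which the boundary merely clips a vanishingly small corner of each impure cube. This is exactly where the word ``irregular'' has to do real work; I would formalize it by requiring the surfaces to be piecewise smooth with bounded curvature and generic with respect to the grid, so that a constant fraction of the intersected cubes are cut non-trivially. Once that geometric fact is in place the remainder of the argument is a routine counting of boxes along a surface.
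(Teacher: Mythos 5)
Your proposal is correct in outline and ends with the same final decomposition as the paper, namely $P_e=\frac{K}{M}\times(\text{typical per-cube error})$ with $K=\Theta(S/\varepsilon^2)$ and $\varepsilon=L/\sqrt[3]{M}$, but you reach the two key estimates by a genuinely different route. The paper models the cut inside an impure cube probabilistically: it approximates the boundary by a plane, parameterizes it by the offset $x$ and angles $\theta,\alpha$ with arbitrary unspecified densities, computes the cut area $S_i(x)$ and the cut-volume fraction $P_{e,i}(x)$ in closed form by a case analysis (its \eqref{equ 13}--\eqref{equ 22}), and then takes expectations, obtaining $E(S_i)=Q_1F_{X1}(\varepsilon)\,\varepsilon^2$ and $E(P_{e,i})=Q_2F_{X2}(\varepsilon)$ constant; the law of large numbers then yields $K=S/E(S_i)$ and $P_e=\frac{K}{M}E(P_{e,i})=Q\frac{S}{L^2}\frac{1}{\sqrt[3]{M}}$ as in \eqref{equ 30}. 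You instead argue deterministically: $K=\Theta(S/\varepsilon^2)$ by a box-covering bound for piecewise-smooth surfaces, and $P_{e,i}=\Theta(1)$ on a constant fraction of impure cubes under a genericity (``irregularity'') hypothesis. Your route is more elementary and makes explicit an asymmetry the paper leaves implicit: the upper bound $P_e=O(M^{-1/3})$ is essentially free, while the matching lower bound requires ruling out boundaries that merely clip corners of cubes; in the paper that role is played by assuming the cut parameters are random with fixed densities, so that $E(P_{e,i})$ is a positive constant independent of $\varepsilon$. What the paper's heavier computation buys is quantitative content: the closed-form integrals are exactly what Theorem 2 specializes to uniform distributions to obtain the explicit coefficient $0.1649\,S/L^2$ in \eqref{equ 35}, which a pure $\Theta$-argument cannot produce. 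Both arguments leave a comparable point informal --- yours the ``generic with respect to the grid'' non-degeneracy assumption, the paper's the planar approximation of the boundary inside small cubes and the treatment of $E(P_{e,i})$ as a fixed constant --- so your plan is an acceptable, if less quantitative, alternative proof of the stated order result.
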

	\begin{figure}[t]
		\centering
		\includegraphics[width=0.35\textheight]{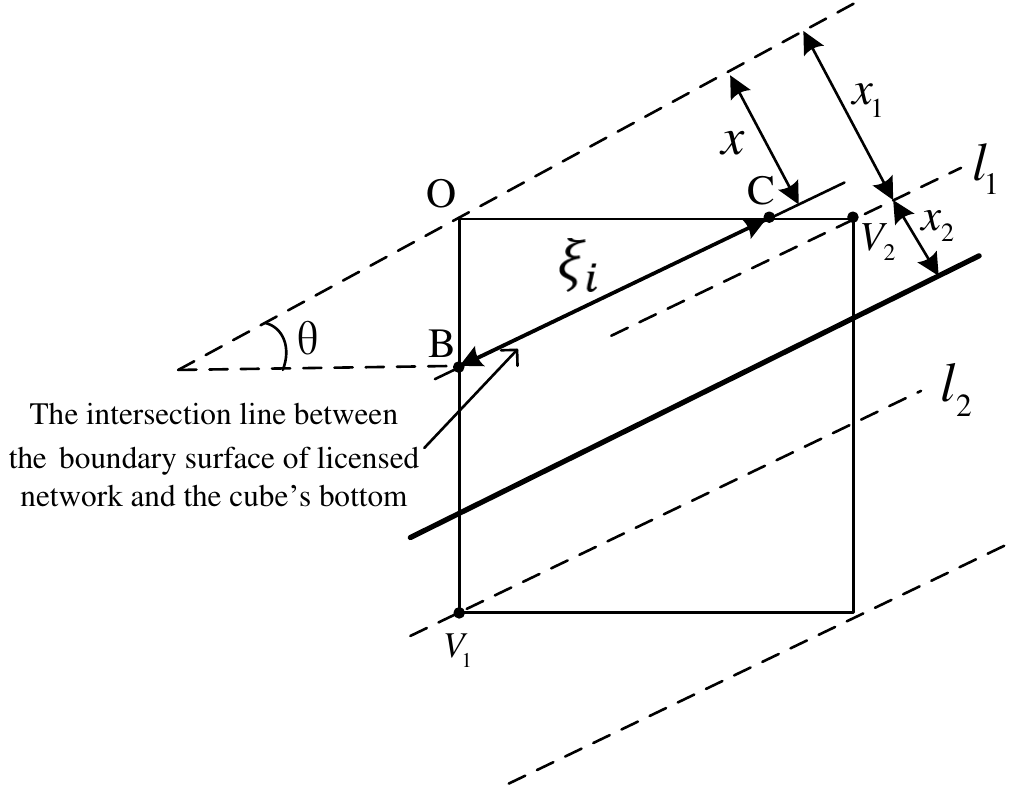}
		\DeclareGraphicsExtensions.
		\caption{The bottom of cube $\#i$.}
		\label{fig 3}
	\end{figure}
		\begin{proof}
			When a cube consists of the spots with different radio parameters, namely, a cube is cut by the licensed networks' boundaries and contains more than one radio parameter, it is regarded as a cube with impure radio environment. Furthermore, the number of the cubes with impure radio environment is defined as $K$. Fig. \ref{fig 2} shows the case that the licensed networks' boundaries cut the cubes. We investigate the boundary surface of a licensed network within a cube. When the cube is small, this boundary surface can be approximated by a plane, as shown in Fig. \ref{fig 2}, where the plane in the cube is the shaded area denoted as plane P. The RPE of the cube in Fig. \ref{fig 2} is the fraction of the volume of the shaded space in cube $\#i$.
			
		    Fig. \ref{fig 3} is a bottom of in the enlarged cube $\#i$ of Fig. \ref{fig 2}. The parameters $x$ and $\theta$ determine a line in Fig. \ref{fig 3}. As shown in Fig. \ref{fig 2} and Fig. \ref{fig 3}, $X = x$ is the distance between vertex O and the intersection line between the boundary surface of licensed network and the cube's bottom (denoted as line BC), $\Theta = \theta$ is the angle between this intersection line and horizontal line, and ${\rm A} = \alpha$ is the angle between the boundary surface of licensed network and the bottom of the cube. As the shape of licensed network's coverage could   be irregular, the probability density functions (PDFs) of the random variables $X = x$, $\Theta = \theta$ and ${\rm A} = \alpha$ are general, namely, we do not make any assumption on the PDFs of these random variables. Thus, these random variables are independent and the PDFs are generalized as follows \cite{ref23}.
			
			\begin{equation}
				{f_X}(x),0 \le x \le \frac{{\sqrt 2 }}{2}\varepsilon \sin \left( {\theta  + \frac{\pi }{4}} \right),
				\label{equ 7}
			\end{equation}
			\begin{equation}
				f_\Theta\left(\theta\right),0 \le x \le \frac{\pi }{4},
				\label{equ 8}
			\end{equation}
			\begin{equation}
				{f_A}(\alpha ),0 \le x \le \frac{\pi }{4},
				\label{equ 9}
			\end{equation}
			where $\varepsilon$ is the side length of a cube.
			
			The length of the intersection line BC in Fig. \ref{fig 3} is
			\begin{equation}
				{\xi _i} = \left\{ \begin{array}{l}
					x\left( {\tan \theta  + \cot \theta } \right),x \le {x_1}\\
					\frac{\varepsilon }{{\cos \theta }},{x_1} \le x \le {x_2}
				\end{array} \right.,
				\label{equ 10}
			\end{equation}
			where $x_1$ and $x_2$ are shown in Fig. \ref{fig 3}, with values
			\begin{equation}
				{x_1} = \varepsilon \sin \theta,
				\label{equ 11}
			\end{equation}
			\begin{equation}
				{x_2} = {\left[ {\frac{{\sqrt 2 }}{2}\varepsilon \sin \left( {\theta  + \frac{\pi }{4}} \right) - \varepsilon \sin \theta } \right]^ + }.
				\label{equ 12}
			\end{equation}
			
			$x_2$ is half the distance between line $l_1$ and line $l_2$, where $l_1$ and $l_2$ are the lines passing through the vertices $V_1$ and $V_2$ paralleling to the intersection line BC, and ${\left[  *  \right]^ + } = \max \{ 0, * \}$.
			
			Fig. \ref{fig 4} is viewed from the perspective perpendicular to plane Q, shown as the plane surrounded by four dotted lines in the enlarged cube $\#i$ of Fig. \ref{fig 2}. The difference value between the top and bottom of cube $\#i$ in the direction of line OA is $2a=\varepsilon\tan{\alpha}$. The distance between vertex O and the boundary surface of licensed network is $h=x\cos{\alpha}$, and the maximum of $x$ due to the symmetry is $x_1+x_2+a$.
			
			The shape of the boundary surface of licensed network in cube $\#i$ changes with the value of $x$. According to the relation between $a$ and $x_2$, the shape has the following cases.
			
			\subsection{$a < {x_2}, i.e., \tan \alpha  < \cos \theta  - \sin \theta$}
			
			In this case, as the value of $x$ increases, the shape of the boundary surface of licensed network in cube $\#i$ becomes triangle, trapezoid, pentagon or parallelogram in turn. We define two intermediate variables as $A = \frac{{{x_1} - \left( {x - 2a} \right)}}{{\sin \alpha }},B = \frac{{x - \left( {{x_1} + 2{x_2}} \right)}}{{\sin \alpha }}$.
			
			According to the relation between $a$ and $x_1$, the area of the boundary surface of licensed network in cube $\#i$ is calculated as follows.
			
			\begin{itemize}
				\item[$\bullet$]When $2a < {x_1}, i.e., \tan \alpha  < \sin \theta$,
			\end{itemize}		
			\begin{equation}
				{S_i}\left( x \right) = \left\{ \begin{array}{l}
					\frac{{{x^2}(\tan\theta  + \cot\theta )}}{{2\sin \alpha }},x \le 2a\\
					\frac{{(x - a)(\tan\theta  + \cot\theta )\varepsilon }}{{\cos \alpha }},2a < x \le {x_1}\\
					\frac{{(x - 2a)(\tan\theta  + \cot\theta ) + \frac{\varepsilon }{{\cos \theta }}}}{2}A\\
					+ \frac{\varepsilon }{{\cos \theta }}(\frac{\varepsilon }{{\cos \alpha }} - A),{x_1} < x \le {x_1} + 2a\\
					\frac{\varepsilon }{{\cos \theta }}\frac{\varepsilon }{{\cos \alpha }},{x_1} + 2a < x \le {x_1} + {x_2} + a
				\end{array} \right..
				\label{equ 13}
			\end{equation}
			
			\begin{itemize}
				\item[$\bullet$]When $2a > {x_1}, i.e., \tan \alpha  > \sin \theta$,
			\end{itemize}	
			\begin{equation}
				{S_i}\left( x \right) = \left\{ \begin{array}{l}
					\frac{{{x^2}(\tan\theta  + \cot\theta )}}{{2\sin \alpha }},x \le {x_1}\\
					\frac{{{x_1}^2(\tan\theta  + \cot\theta )}}{{2\sin \alpha }} + \frac{\varepsilon }{{\cos \theta }}\frac{{x - {x_1}}}{{\sin \alpha }},{x_1} < x \le 2a\\
					\frac{{(x - 2a)(\tan\theta  + \cot\theta ) + \frac{\varepsilon }{{\cos \theta }}}}{2}A\\
					+ \frac{\varepsilon }{{\cos \theta }}(\frac{\varepsilon }{{\cos \alpha }} - A),2a < x \le {x_1} + 2a\\
					\frac{\varepsilon }{{\cos \theta }}\frac{\varepsilon }{{\cos \alpha }},{x_1} + 2a < x \le {x_1} + {x_2} + a
				\end{array} \right..	
				\label{equ 14}
			\end{equation}
			
			\subsection{$a > {x_2}, i.e., \tan \alpha  > \cos \theta  - \sin \theta$}
			
			In this case, as the value of $x$ increases, the shape of the boundary surface of licensed network in cube $\#i$ becomes triangle, trapezoid or pentagon in turn. Similarly, we define an intermediate variable as $C=\tan\theta+\cot\theta$.
			
			According to the relation between a and $x_1$, the area of the boundary surface of licensed network in cube $\#i$ is calculated as follows.
			
			\begin{itemize}
				\item[$\bullet$]When $2a < {x_1}, i.e., \tan \alpha  < \sin \theta$,
			\end{itemize}	
			\begin{equation}
				{S_i}\left( x \right) = \left\{ \begin{array}{l}
					\frac{{{x^2}(\tan\theta  + \cot\theta )}}{{2\sin \alpha }},x \le 2a\\
					\frac{{(x - a)(\tan\theta  + \cot\theta )\varepsilon }}{{\cos \alpha }},2a < x \le {x_1}\\
					\frac{{((x - 2a)C + \frac{\varepsilon }{{\cos \theta }})}}{2}A + \frac{{(xC + \frac{\varepsilon }{{\cos \theta }})}}{2}B \\
					+ \frac{\varepsilon }{{\cos \theta }}(\frac{\varepsilon }{{\cos \alpha }} - A - B),{x_1} < x \le {x_1} + {x_2} + a
				\end{array} \right..	
				\label{equ 15}
			\end{equation}
			
			\begin{itemize}
				\item[$\bullet$]When $2a > {x_1}, i.e., \tan \alpha  > \sin \theta$,
			\end{itemize}	
			\begin{equation}
				{S_i}\left( x \right) = \left\{ \begin{array}{l}
					\frac{{{x^2}(\tan\theta  + \cot\theta )}}{{2\sin \alpha }},x \le 2a\\
					\frac{{{x_1}^2(\tan\theta  + \cot\theta )}}{{2\sin \alpha }} + \frac{\varepsilon }{{\cos \theta }}\frac{{x - {x_1}}}{{\sin \alpha }},{x_1} < x \le 2a\\
					\frac{{((x - 2a)C + \frac{\varepsilon }{{\cos \theta }})}}{2}A + \frac{{(xC + \frac{\varepsilon }{{\cos \theta }})}}{2}B \\
					+ \frac{\varepsilon }{{\cos \theta }}(\frac{\varepsilon }{{\cos \alpha }} - A - B),2a < x \le {x_1} + {x_2} + a
				\end{array} \right..	
				\label{equ 16}
			\end{equation}
			
			In the 3D case, RPE is the fraction of the volume of shaded space in cube $\#i$. As the value of $x$ increases, the shape of the shaded part will gradually change from a triangular block to a combination of a triangular block and a prismatic table. Here we take \eqref{equ 13} as an example, and first derive the volume of the prismatic table as follows.
			\begin{equation}
				V = \frac{1}{3}h\left( {{S_a} + {S_b} + \sqrt {{S_a} \times {S_b}} } \right),	
				\label{equ 17}
			\end{equation}
			where $S_a$ is the area of the top of the prismatic table, $S_b$ is the area of the bottom of the prismatic table, and $h=x\cos{\alpha}$ is the altitude of the prismatic table, i.e., the distance between vertex O and the boundary surface of licensed network. Substituting the value of $S_i\left(x\right)$ from \eqref{equ 13} into \eqref{equ 18}, \eqref{equ 19}, \eqref{equ 20} and \eqref{equ 21}, we have the following results.
			\begin{itemize}
				\item[$\bullet$]When $x\le2a$, the volume of the triangular block is
			\end{itemize}	
			\begin{equation}
				{V_1}(x) = \frac{1}{3}{S_i}\left( x \right)x\cos \alpha.	
				\label{equ 18}
			\end{equation}
			\begin{itemize}
				\item[$\bullet$]When $2a < x \le {x_1}$, the volume of the prismatic table is
			\end{itemize}
			\begin{equation}
				{V_2}(x) = \frac{1}{3}h\left( {{S_i}\left( {2a} \right) + {S_i}(x) + \sqrt {{S_i}(x) \times {S_i}(2a)} } \right),
				\label{equ 19}
			\end{equation}
			where $h = (x - 2a)\cos \alpha$.\\
			\begin{itemize}
				\item[$\bullet$]When ${x_1} < x \le {x_1} + 2a$, the volume of the prismatic table is
			\end{itemize}
			\begin{equation}
				{V_3}(x) = \frac{1}{3}h\left( {{S_i}\left( { {x_1}} \right) + {S_i}(x) + \sqrt {{S_i}(x) \times {S_i}({x_1})} } \right),
				\label{equ 20}
			\end{equation}
			where $h = (x - {x_1})\cos \alpha$.\\
			\begin{itemize}
				\item[$\bullet$]When ${x_1} + 2a < x \le {x_1} + {x_2} + a$, the volume of the prismatic table is
			\end{itemize}
			\begin{equation}
				{V_4}(x) = \frac{1}{3}h\left( {{S_i}\left( { {x_1} + 2a} \right) + Si(x) + \sqrt {Si(x) \times Si({x_1} + 2a)} } \right),	
				\label{equ 21}
			\end{equation}
			where $h = (x - {x_1} - 2a)\cos \alpha$.\\ Thus the RPE of cube $i$ in \eqref{equ 13} is
			\begin{equation}
				{P_{e,i}} = \left\{ \begin{array}{l}
					\frac{{{V_1}(x)}}{{{\varepsilon ^3}}},x \le 2a\\
					\frac{{{V_2}(x) + {V_1}(2a)}}{{{\varepsilon ^3}}},2a < x \le {x_1}\\
					\frac{{{V_3}(x) + {V_2}({x_1}) + {V_1}(2a)}}{{{\varepsilon ^3}}},{x_1} < x \le {x_1} + 2a\\
					\frac{{{V_4}(x) + {V_3}({x_1} + 2a) + {V_2}({x_1}) + {V_1}(2a)}}{{{\varepsilon ^3}}},\\
					{x_1} + 2a < x \le {x_1} + {x_2} + a
				\end{array} \right..	
				\label{equ 22}
			\end{equation}
			
			The expectation of $S_i$ and $P_{e,i}$ in \eqref{equ 13} are shown in \eqref{equ 36} and \eqref{equ 37}.
			
			\stripsep 1pt
			\begin{strip}
				\begin{equation}\label{equ 36}
					\begin{array}{l}
						{E_1}\left( {{S_i}} \right) = \int_0^{{{\tan }^{ - 1}}\frac{{\sqrt 2 }}{2}} {\left( {\int_{{{\tan }^{ - 1}}\frac{{\sqrt 2 }}{2}}^{\frac{\pi }{4} - {{\sin }^{ - 1}}\tan \alpha } {\left( \begin{array}{l}
										\int_0^{\varepsilon \tan \alpha } {\frac{{{x^2}\left( {\tan \theta  + \cot \theta } \right)}}{{2\sin \alpha }}{f_X}\left( x \right)} dx + \int_{\varepsilon \tan \alpha }^{\varepsilon \sin \theta } {\frac{{\left( {x - a} \right)\left( {\tan \theta  + \cot \theta } \right)\varepsilon }}{{\cos \alpha }}{f_X}\left( x \right)} dx\\
										+ \int_{\varepsilon \sin \theta }^{\varepsilon \sin \theta  + \varepsilon \tan \alpha } {\left( \begin{array}{l}
												\frac{{\left( {x - 2a} \right)\left( {\tan \theta  + \cot \theta } \right) + \frac{\varepsilon }{{\cos \theta }}}}{2}A\\
												+ \frac{\varepsilon }{{\cos \theta }}\left( {\frac{\varepsilon }{{\cos \alpha }} - A} \right)
											\end{array} \right)} {f_X}\left( x \right)dx\\
										+ \int_{\varepsilon \sin \theta  + \varepsilon \tan \alpha }^{\frac{{\varepsilon \left( {\cos \theta  + \sin \theta  + \tan \alpha } \right)}}{2}} {\frac{\varepsilon }{{\cos \theta }}\frac{\varepsilon }{{\cos \alpha }}} {f_X}\left( x \right)dx
									\end{array} \right){f_\Theta }\left( \theta  \right)d\theta } } \right)} {f_{\rm A}}(\alpha )d\alpha \\
						= \int_0^{ta{n^{ - 1}}\frac{{\sqrt 2 }}{2}} {\left( {\int_{{{\sin }^{ - 1}}\tan \alpha }^{\frac{\pi }{4} - {{\sin }^{ - 1}}\tan \alpha } {\left( \begin{array}{l}
										{\varepsilon ^2}{F_X}\left( {\varepsilon \tan \alpha } \right)\frac{{\left( {\tan \theta  + \cot \theta } \right)\sin \alpha }}{{2{{\cos }^2}\alpha }} + {\varepsilon ^2}{F_X}\left( {\varepsilon \sin \theta } \right)\frac{{2\sin \theta  - \tan \alpha }}{2}\\
										- {\varepsilon ^2}{F_X}\left( {\varepsilon \tan \alpha } \right)\frac{{\tan \alpha }}{2} + {\varepsilon ^2}\left( \begin{array}{l}
											{F_X}\left( {\frac{{\varepsilon \left( {\cos \theta  + \sin \theta  + \tan \alpha } \right)}}{2}} \right)\\
											- {F_X}\left( {\varepsilon \sin \theta } \right)
										\end{array} \right)\frac{1}{{\cos \theta \cos \alpha }}\\
										- {\varepsilon ^2}\left( \begin{array}{l}
											{F_X}\left( {\varepsilon \sin \theta  + \varepsilon \tan \alpha } \right)\\
											- {F_X}\left( {\varepsilon \sin \theta } \right)
										\end{array} \right)\frac{{\left( {\sin \theta  + \tan \alpha } \right)\left( {1 + \cos \theta \tan \alpha \left( {\tan \theta  + \cot \theta } \right)} \right)}}{{2\cos \theta \sin \alpha }}\\
										+ {\varepsilon ^2}\left( \begin{array}{l}
											\left( {\sin \theta  + \tan \alpha } \right)\\
											{F_X}\left( {\varepsilon \sin \theta  + \varepsilon \tan \alpha } \right)\\
											- \sin \theta {F_X}\left( {\varepsilon \sin \theta } \right)
										\end{array} \right)\left( \begin{array}{l}
											\frac{{\left( {\tan \theta  + \cot \theta } \right)}}{{2\sin \alpha }}\left( {\sin \theta  + 2\tan \alpha } \right)\\
											+ \frac{1}{{2\cos \theta \sin \alpha }}
										\end{array} \right)\\
										+ {\varepsilon ^2}\frac{{\left( {\tan \theta  + \cot \theta } \right)}}{{2\sin \alpha }}\left( \begin{array}{l}
											{\sin ^2}\theta {F_X}\left( {\varepsilon \sin \theta } \right)\\
											- {\left( {\sin \theta  + \tan \alpha } \right)^2}{F_X}\left( {\varepsilon \sin \theta  + \varepsilon \tan \alpha } \right)
										\end{array} \right)\\
										+ \varepsilon \left( {\frac{{\left( {\tan \theta  + \cot \theta } \right)}}{{2\sin \alpha }}\left( {\sin \theta  + 2\tan \alpha } \right) + \frac{1}{{2\cos \theta \sin \alpha }}} \right)\int_{\varepsilon \sin \theta }^{\varepsilon \sin \theta  + \varepsilon \tan \alpha } {{F_X}\left( x \right)dx} \\
										- \varepsilon \int_{\varepsilon \tan \alpha }^{\varepsilon \sin \theta } {{F_X}\left( x \right)dx}  + \frac{{\left( {\tan \theta  + \cot \theta } \right)}}{{2\sin \alpha }}\int_{\varepsilon \sin \theta }^{\varepsilon \sin \theta  + \varepsilon \tan \alpha } {2x{F_X}\left( x \right)dx} \\
										- \frac{{\left( {\tan \theta  + \cot \theta } \right)}}{{2\sin \alpha }}\int_0^{\varepsilon \tan \alpha } {2x{F_X}\left( x \right)dx}
									\end{array} \right)} {f_\Theta }\left( \theta  \right)d\theta } \right)} {f_{\rm A}}(\alpha )d\alpha\\
						
						= {\varepsilon ^2}{Q_1}{F_{X1}}\left( \varepsilon \right).
					\end{array}
				\end{equation}
			\end{strip}
			\begin{strip}
				\begin{equation}\label{equ 37}
					\begin{array}{l}
						{E_1}\left( {{P_{e,i}}} \right) = \int_0^{{{\tan }^{ - 1}}\frac{{\sqrt 2 }}{2}} {\left( {\int_{{{\sin }^{ - 1}}\tan \alpha }^{\frac{\pi }{4} - {{\sin }^{ - 1}}\tan \alpha } {\left( \begin{array}{l}
										\int_0^{\varepsilon \tan \alpha } {\frac{{{V_1}\left( x \right)}}{{{\varepsilon ^3}}}{f_X}\left( x \right)dx}  + \int_{\varepsilon \tan \alpha }^{\varepsilon \sin \theta } {\frac{{{V_2}\left( x \right) + {V_1}\left( {2a} \right)}}{{{\varepsilon ^3}}}{f_X}\left( x \right)} dx\\
										+ \int_{\varepsilon \sin \theta }^{\varepsilon \sin \theta  + \varepsilon \tan \alpha } {\frac{{{V_3}\left( x \right) + {V_2}\left( {{x_1}} \right) + {V_1}\left( {2a} \right)}}{{{\varepsilon ^3}}}} {f_X}\left( x \right)dx\\
										+ \int_{\varepsilon \sin \theta  + \varepsilon \tan \alpha }^{\frac{{\varepsilon \left( {\cos \theta  + \sin \theta  + \tan \alpha } \right)}}{2}} {\frac{{{V_4}\left( x \right) + {V_3}\left( {{x_1} + 2a} \right) + {V_2}\left( {{x_1}} \right) + {V_1}\left( {2a} \right)}}{{{\varepsilon ^3}}}} {f_X}\left( x \right)dx
									\end{array} \right)} {f_\Theta }\left( \theta  \right)d\theta } \right)} {f_{\rm A}}(\alpha )d\alpha\\
						= {Q_2}{F_{X2}}\left( \varepsilon \right).
					\end{array}
				\end{equation}
			\end{strip}
			\begin{figure}[t]
				\centering
				\includegraphics[width=0.26\textheight]{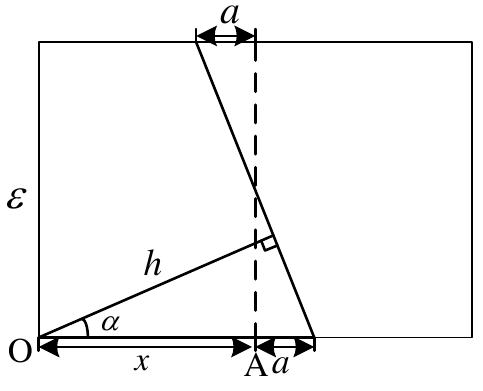}
				\DeclareGraphicsExtensions.
				\caption{The left view of cube $\#i$.}
				\label{fig 4}
			\end{figure}
			Because the upper and lower limits of the integral with respect to $\theta$ and $\alpha$ are constants in \eqref{equ 36} and \eqref{equ 37}, only the integral about $x$ is calculated. Meanwhile, since the side length of cube $\varepsilon$ is a constant,$\ F_{X1}\left(\varepsilon\right)$ and $F_{X2}\left(\varepsilon\right)$ are constants related to $\varepsilon$, and $Q_1$ and $Q_2$ are constant.
			
			Similarly, the expectation of $S_i$ in \eqref{equ 14}, \eqref{equ 15} and \eqref{equ 16} are represented as $E_2\left(S_i\right)$,  $E_3\left(S_i\right)$ and $E_4\left(S_i\right)$, and the expectation of $P_{e,i}$ in \eqref{equ 14}, \eqref{equ 15} and \eqref{equ 16} are represented as $E_2\left(P_{e,i}\right)$, $E_3\left(P_{e,i}\right)$ and $E_4\left(P_{e,i}\right)$. Thus the comprehensive expectation of $S_i$ and $P_{e,i}$ in cube $\#i$ are
			\begin{figure}[t]
				\centering
				\includegraphics[width=0.36\textheight]{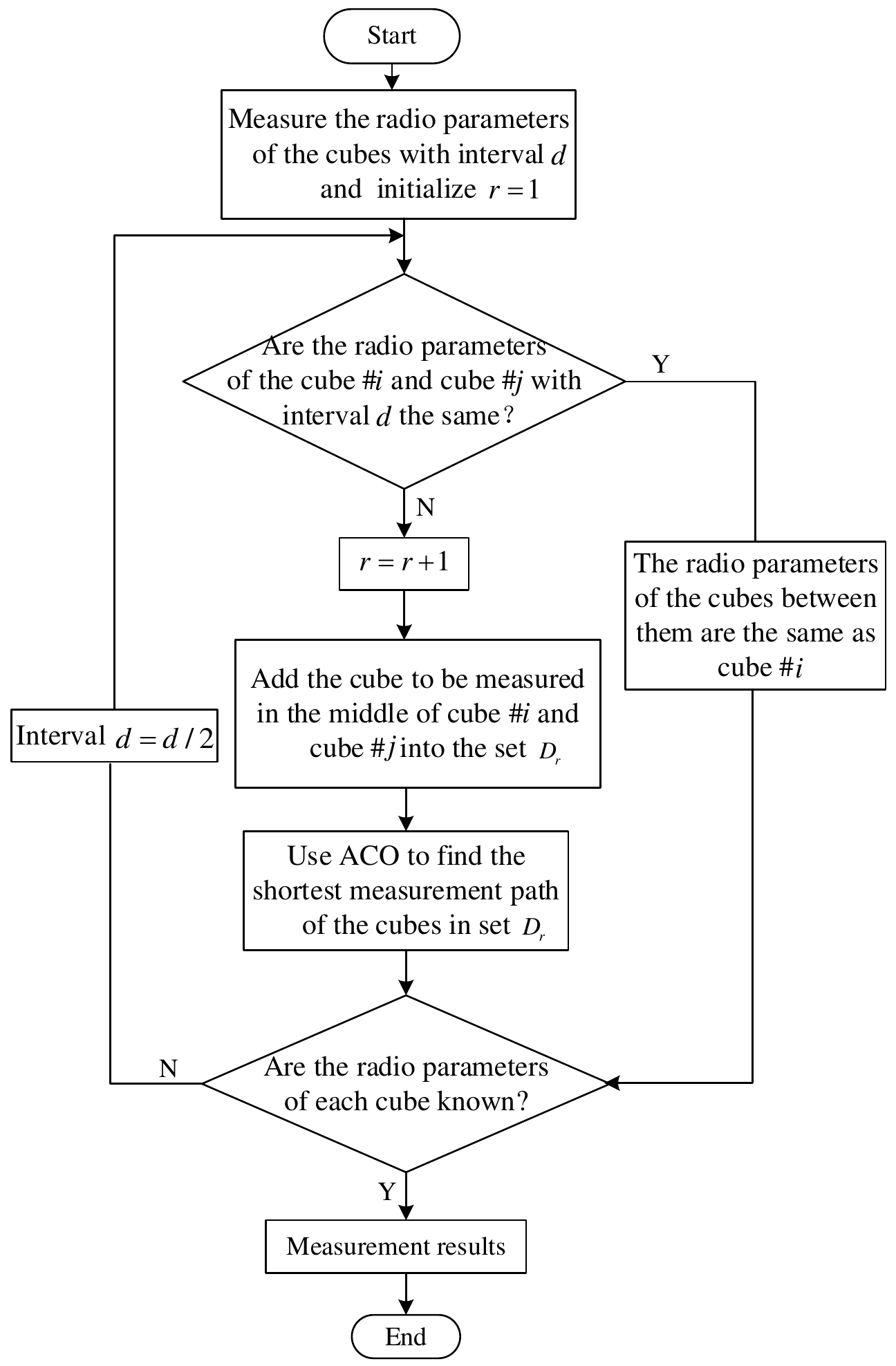}
				\DeclareGraphicsExtensions.
				\caption{3D SOM algorithm.}
				\label{fig 5}
			\end{figure}
			\begin{figure}[t]
				\centering
				\includegraphics[width=0.22\textheight]{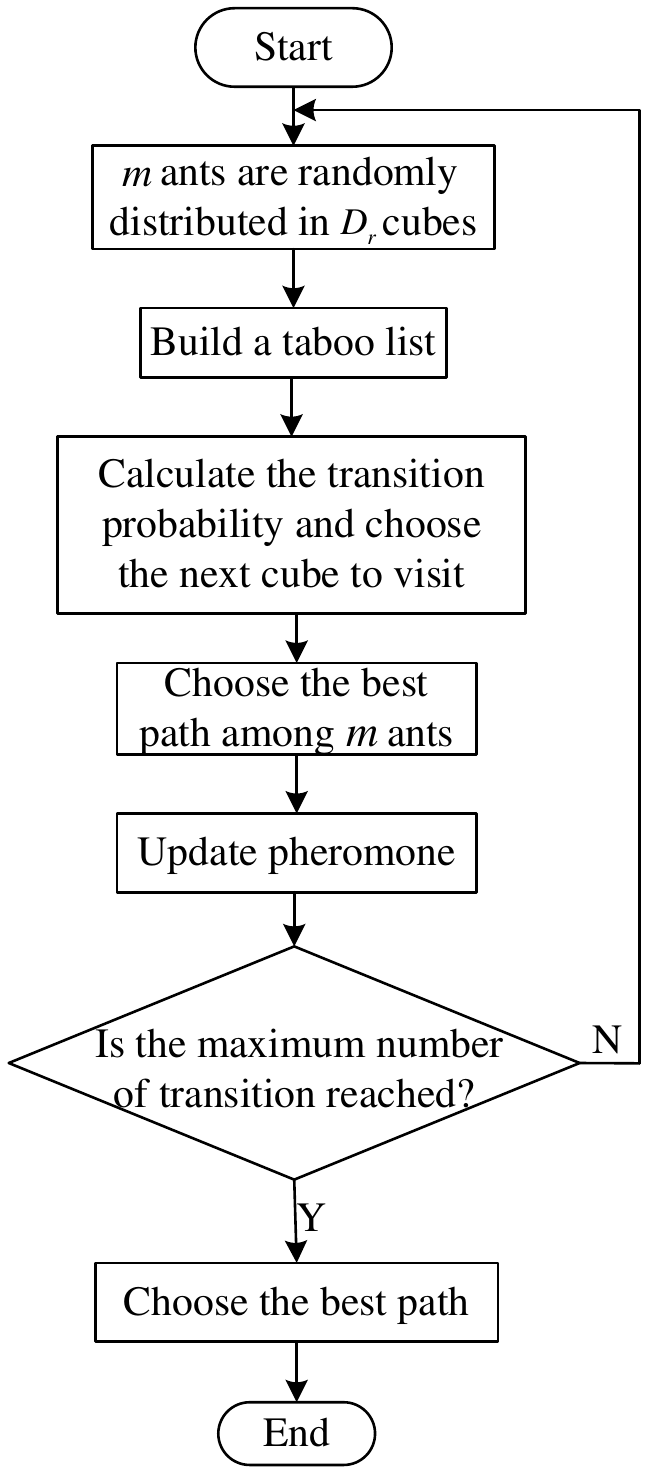}
				\DeclareGraphicsExtensions.
				\caption{Path planning algorithm of UAV based on ACO.}
				\label{fig 6}
			\end{figure}
			\begin{equation}
				E\left( {{S_i}} \right) = {E_1}\left( {{S_i}} \right) + {E_2}\left( {{S_i}} \right) + {E_3}\left( {{S_i}} \right) + {E_4}\left( {{S_i}} \right),		
				\label{equ 25}
			\end{equation}
			\begin{equation}
				E\left( {{P_{e,i}}} \right) = {E_1}\left( {{P_{e,i}}} \right) + {E_2}\left( {{P_{e,i}}} \right) + {E_3}\left( {{P_{e,i}}} \right) + {E_4}\left( {{P_{e,i}}} \right).	
				\label{equ 26}
			\end{equation}
			
			If the radio environment of the first $K$ cubes is impure among the $M$ cubes, then
			\begin{equation}
				E\left( {{S_i}} \right)\mathop  = \limits^{(a)} \frac{1}{K}\sum\limits_{i = 1}^K {{S_i}} \mathop  = \limits^{(b)} \frac{1}{K}S,	
				\label{equ 27}
			\end{equation}
			where $(a)$ is due to the Law of Large Numbers (LLN), and $(b)$ is due to the fact that the cubes with impure radio environment cover all the boundary surfaces of licensed networks. The value of $K$ is estimated as
			\begin{equation}
				K = \frac{S}{{E\left( {{S_i}} \right)}}.	
				\label{equ 28}
			\end{equation}
			
			The RPE of the entire 3D region is
			\begin{equation}
				{P_e} = \frac{1}{M}\sum\limits_{i = 1}^K {{P_{e,i}}}  = \frac{K}{M}E\left( {{P_{e,i}}} \right).	
				\label{equ 29}
			\end{equation}
			Substituting the value of $K$ from (28) and the value of $E\left(P_{e,i}\right)$ from \eqref{equ 26} into \eqref{equ 29}, we have
			\begin{equation}
				{P_e} = S\frac{1}{M}\frac{{E\left( {{P_{e,i}}} \right)}}{{E\left( {{S_i}} \right)}}\; = \;Q\;S\frac{1}{{{\varepsilon ^2}M}} = \;Q\frac{S}{{{L^2}}}\frac{1}{{\sqrt[3]{M}}} = \Theta \left( {\frac{1}{{\sqrt[3]{M}}}} \right),	
				\label{equ 30}
			\end{equation}
			where $Q$ is a constant, $S$ is the area of all the boundary surfaces of licensed networks, $L$ is the length of the 3D space edge, $K = Q\frac{S}{{{L^2}}}$ is the number of the cubes with impure radio environment, and $M$ is the number of cubes.
		\end{proof}
		
		Thus, we obtain the relation between the error of 3D SOM and the number of measurements when the shape of licensed network's coverage is irregular. An increase of (a reduction of) the number of measurements will reduce (increase) the error of 3D SOM. A specific case is shown in the following theorem.
		\begin{Theo}
			If the parameters related with the shape of licensed network's coverage, namely, the distance between vertex $O$ and the intersection line between the boundary surface of licensed network and the cube's bottom ($x$), the angle between the intersection line between the boundary surface of licensed network and horizontal line ($\theta$), and the angle between the boundary surface of licensed network and the bottom of the cube ($\alpha$), follow uniform distribution, the RPE is expressed as
			\begin{equation}
				{P_e} = K\frac{1}{{\sqrt[3]{M}}},	
				\label{equ 31}
			\end{equation}
			where $K = 0.1649\frac{S}{{{L^2}}}$ ,which is a constant determined by the area of all the boundary surfaces of licensed networks $S$ and the length of the 3D space edge $L$.
		\end{Theo}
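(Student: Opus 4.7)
The plan is to specialize the general formula obtained in Theorem~1 to the uniform case and to compute the resulting numerical constant explicitly. Recall that Theorem~1 already gives
\begin{equation*}
P_e \;=\; \frac{S}{L^{2}}\cdot\frac{E(P_{e,i})}{E(S_i)}\cdot\frac{1}{\sqrt[3]{M}},
\end{equation*}
so the entire content of Theorem~2 is to show that, under the uniform assumption on $X$, $\Theta$, $A$, the dimensionless ratio $E(P_{e,i})/E(S_i)$ evaluates to the constant $0.1649$. Therefore I would not redo the geometric decomposition of Section~III, but reuse \eqref{equ 13}--\eqref{equ 22} as given.

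First, I would write down the three uniform densities explicitly. Over the supports specified in \eqref{equ 7}--\eqref{equ 9} the densities become
\begin{equation*}
f_X(x)=\frac{1}{\tfrac{\sqrt 2}{2}\varepsilon\sin(\theta+\tfrac{\pi}{4})},\qquad
f_\Theta(\theta)=\frac{4}{\pi},\qquad
f_A(\alpha)=\frac{4}{\pi},
\end{equation*}
and I would note, as already exploited in the proof of Theorem~1, that the integration limits in $\theta$ and $\alpha$ are absolute constants and that the $x$-integral rescales cleanly with $\varepsilon$. Hence $E_k(S_i)$ is homogeneous of degree $2$ in $\varepsilon$ and $E_k(P_{e,i})$ is homogeneous of degree $0$ in $\varepsilon$, so the ratio $E(P_{e,i})/E(S_i)$ is a pure number times $\varepsilon^{-2}$, which matches the $1/(\varepsilon^2 M)=1/(L^2\sqrt[3]{M})$ scaling in \eqref{equ 30}.

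The second step is the actual numerical evaluation. I would proceed case-by-case through the four subregions listed in Sections~III-A and III-B, i.e.\ the four piecewise formulas \eqref{equ 13}--\eqref{equ 16} for $S_i(x)$ together with the corresponding volume formulas \eqref{equ 18}--\eqref{equ 21} assembled into $P_{e,i}$ by \eqref{equ 22}. In each case the inner $x$-integral is a polynomial (or low-degree algebraic) function of $x$, $\sin\theta$, $\cos\theta$, $\tan\alpha$, $\sin\alpha$, $\cos\alpha$, so it can be integrated in closed form. The outer integrals in $\theta$ and $\alpha$ then reduce to standard trigonometric integrals over subintervals of $[0,\pi/4]$ whose endpoints are determined by the inequalities $\tan\alpha\lessgtr\sin\theta$ and $\tan\alpha\lessgtr\cos\theta-\sin\theta$ that separate the four cases. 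Summing the four contributions as in \eqref{equ 25}--\eqref{equ 26} yields $E(S_i)$ and $E(P_{e,i})$ as explicit numbers times the appropriate powers of $\varepsilon$. Dividing, multiplying by $S/L^{2}$, and collecting constants produces the coefficient $0.1649$ claimed in the theorem, and then \eqref{equ 29}--\eqref{equ 30} deliver \eqref{equ 31}.

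The conceptually easy but operationally hard part is the last step: the piecewise integrands in \eqref{equ 13}--\eqref{equ 16} already occupy several display lines each, and the four $\theta$/$\alpha$ subregions multiply the bookkeeping. I expect the main obstacle to be keeping the case boundaries consistent between the $S_i$ computation and the $P_{e,i}$ computation so that the prismatic-frustum decomposition \eqref{equ 17}--\eqref{equ 22} is matched to the correct branch of $S_i(x)$ in every subregion; once that matching is maintained, the remaining work is symbolic integration that can be verified by a computer algebra system to confirm the numerical value $0.1649$.
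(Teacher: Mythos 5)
Your proposal is correct and takes essentially the same route as the paper: the paper's own proof of Theorem 2 likewise just substitutes the uniform densities for $X$, $\Theta$, and ${\rm A}$ into the Theorem 1 machinery and asserts the resulting constant $0.1649\,S/L^2$ without displaying the integration. Your plan is in fact more explicit than the paper about the $\varepsilon$-scaling of $E(S_i)$ and $E(P_{e,i})$ and about the case-by-case evaluation needed to obtain the numerical constant.
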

		\begin{proof}
			If the parameters related with the shape of licensed network's coverage are uniformly distributed, then $x$, $\theta$  and $\alpha$ are independent variables with probability density functions (PDFs)
			\begin{equation}
				{f_X}\left( x \right) = \frac{1}{{\frac{{\sqrt 2 }}{2}\varepsilon \sin \left( {\theta  + \frac{\pi }{4}} \right)}},0 \le x \le \frac{{\sqrt 2 }}{2}\varepsilon \sin \left( {\theta  + \frac{\pi }{4}} \right),	
				\label{equ 32}
			\end{equation}
			\begin{equation}
				{f_\Theta }\left( \theta  \right) = \frac{4}{\pi },0 \le x \le \frac{\pi }{4},		
				\label{equ 33}
			\end{equation}
			\begin{equation}
				{f_{\rm A}}(\alpha ) = \frac{4}{\pi },0 \le x \le \frac{\pi }{4}.	
				\label{equ 34}
			\end{equation}
			Substituting ${f_X}\left( x \right)$ in \eqref{equ 32}, ${f_\Theta }\left( \theta  \right)$ in \eqref{equ 33} and ${f_{\rm A}}(\alpha )$ in \eqref{equ 34} into \eqref{equ 7}, \eqref{equ 8} and \eqref{equ 9} respectively, we have
			\begin{equation}
				{P_e} = 0.1649\frac{S}{{{L^2}}}\frac{1}{{\sqrt[3]{M}}}.	
				\label{equ 35}
			\end{equation}
		\end{proof}
		
		As revealed in \textbf{Theorem 2}, the precise result is obtained when the parameters related with the shape of licensed network's coverage are uniformly distributed. Thus, the tradeoff between accuracy and efficiency in 3D SOM is revealed.

		\section{3D SOM Algorithm using UAV}
		
		In Section II and Section III, we study the representation of radio parameters and the performance of 3D SOM. In this section, we design a 3D SOM algorithm using UAV. Besides, the path planning algorithm of UAV is designed for the SOM using UAV.
		
		\subsection{3D SOM Algorithm}
		The UAV is applied for 3D SOM. Firstly, the entire 3D space is divided into small cubes. Then, the UAV is applied to measure the radio parameters in the cubes. Since the UAV measures a cube once and regards the measurement result as the radio parameter of the cube, there exists a probability to have a wrong measurement. However, the increase of $M$ could reduce the measurement error. As the correlation between the adjacent cubes in the space is very large, if each cube in the space is measured, there will be a large amount of redundancy, which will cause the waste of measurement time. Facing this problem, we designed the 3D SOM algorithm and path planning algorithm of UAV to reduce the measurement time, which are shown in Fig. \ref{fig 5} and Fig. \ref{fig 6}, respectively.

		In the designed 3D SOM algorithm, the cubes with interval $d$ are firstly added into the set $D_r$ and $r$ is initialized as 1, where $r$ is the number of iterations. The radio parameters of the cubes in the set $D_r$ are measured and recorded. Then, the following steps are implemented.
		\begin{itemize}
			\item[$\bullet$]\textbf{\emph{Step 1}}: If the radio parameters of cube \#$i$ and cube \#$j$ with interval $d$ are the same, the radio parameters of the cubes between them are the same as cube \#$i$. Otherwise, set $r=r+1$,  and the cube to be measured in the middle of cube \#$i$ and cube \#$j$ are added into the set $D_r$, and the cubes in the set $D_r$ will be measured in Step 2.
			\item[$\bullet$]\textbf{\emph{Step 2}}: The path planning algorithm of UAV to visit the cubes in the set $D_r$ is designed. The Ant Colony Optimization (ACO) \cite{ref24},\cite{ref25} is applied in solving the path planning problem, and therefore we utilize the ACO to find the shortest measurement path of the cubes in set $D_r$, as shown in Fig. \ref{fig 6}. Then, set $d=d/2$.
			\item[$\bullet$]If the radio parameters of at least one cube are unknown, repeat Step 1 and Step 2. Otherwise, the 3D SOM is completed, namely, the radio parameter of every cube is known in the entire 3D space.
		\end{itemize}
		
		The designed SOM algorithm can reduce the redundancy of measurement, and the path planning algorithm is applied to reduce the flight distance of UAV.
		\begin{figure}[t]
			\centering
			\includegraphics[width=0.37\textheight]{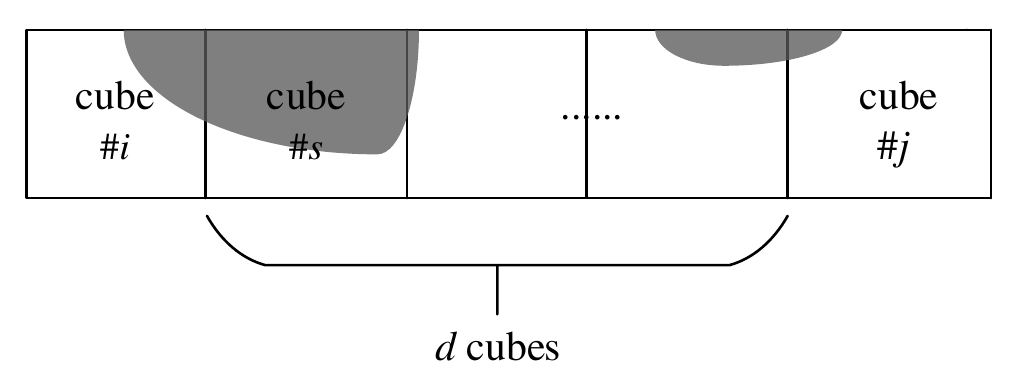}
			\DeclareGraphicsExtensions.
			\caption{The measurement error.}
			\label{fig 7}
		\end{figure}
		\begin{figure}[t]
			\centering
			\includegraphics[width=0.33\textheight]{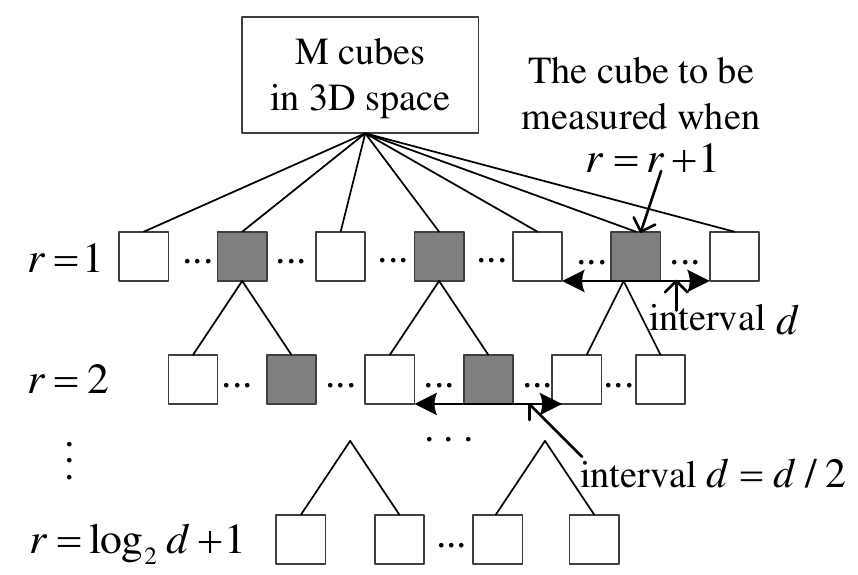}
			\DeclareGraphicsExtensions.
			\caption{Measurement process.}
			\label{fig 8}
		\end{figure}
		
		\subsection{Analysis of 3D SOM algorithm}	
		
		According to the designed 3D SOM algorithm, if the radio parameters of the cubes with interval $d$ are the same, the radio parameters of the cubes between them are also the same. As illustrated in Fig. \ref{fig 7}, the radio parameters of the cubes \#$i$ and \#$j$ are the same according to the definition of radio parameters in (4), but the radio parameter of cube \#$s$ is different from cube \#$i$. Obviously, there will be measurement error, which becomes larger with the increase of the interval $d$.
		
		As to the performance of the proposed 3D SOM algorithm, we have the following theorem.
		
		\begin{Theo}
			In the 3D space, the upper bound of the number of measurements using the designed 3D SOM algorithm is
			\begin{equation}
				{D_d} \le \frac{M}{{{d^3}}} + \frac{{2\sqrt 3 S}}{{3{{\left( {\varepsilon L} \right)}^2}}}\left( {1 - \frac{1}{{{d^2}}}} \right){M^{\frac{2}{3}}},	
				\tag{36}
			\end{equation}
			where $M$ is the number of cubes, $S$ is the area of all the boundary surfaces of licensed networks and $L$ is the length of the 3D space's edge. If the interval $d > 1, d \in N$, the number of measurements using the designed 3D SOM algorithm is smaller than that using regular measurement algorithm, which is the strategy that UAV follows a zig-zag pattern across every cube of a 3D space denoted as the Snake Traversal \cite{ref26}.
		\end{Theo}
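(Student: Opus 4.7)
The plan is to split $D_d$ into the initial sampling pass plus the successive refinement passes and bound each separately. The initial pass samples one cube in every $d\times d\times d$ block and therefore contributes exactly $(L/(d\varepsilon))^{3}=M/d^{3}$ measurements. For the $r$-th halving step ($r=1,\ldots,\log_{2}d$) the spacing between measured cubes drops from $s_{r-1}=(d/2^{r-1})\varepsilon$ to $s_{r}=s_{r-1}/2$; by Step~1 of the algorithm a midpoint is added only when two adjacent cubes at the coarser spacing disagree, and a disagreement forces the boundary surface of some licensed network to separate the pair. Hence the number of midpoint measurements introduced at stage $r$ is bounded by the number of adjacent-cube pairs at spacing $s_{r-1}$ that are split by the boundary surfaces of the licensed networks.

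The next step is to invoke a digital-geometry estimate: a piecewise-planar surface of area $S$ separates at most $2\sqrt{3}\,S/s^{2}$ pairs of axis-aligned cubes of side~$s$. This estimate ultimately reduces to the Cauchy-type inequality $|n_{1}|+|n_{2}|+|n_{3}|\le\sqrt{3}$ on the unit normal, together with an accounting of edge- and vertex-effects where one patch of surface crosses several cube faces simultaneously. Setting $s=s_{r-1}$ and summing the resulting geometric series
\begin{equation*}
\sum_{r=1}^{\log_{2}d}\frac{1}{s_{r-1}^{2}}=\frac{1}{(d\varepsilon)^{2}}\sum_{r=1}^{\log_{2}d}4^{r-1}=\frac{1}{3\varepsilon^{2}}\left(1-\frac{1}{d^{2}}\right)
\end{equation*}
gives a total refinement contribution of at most $\tfrac{2\sqrt{3}}{3}\cdot\tfrac{S}{\varepsilon^{2}}\bigl(1-\tfrac{1}{d^{2}}\bigr)$. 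Using $M=(L/\varepsilon)^{3}$ to substitute $1/\varepsilon^{2}=M^{2/3}/L^{2}$ reproduces the refinement term stated in the theorem, and adding $M/d^{3}$ completes the upper bound on $D_{d}$.

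For the comparison with the Snake Traversal, which requires exactly $M$ measurements, observe that for every integer $d>1$ we have $M/d^{3}\le M/8$, while the refinement term scales as $O(M^{2/3})$ and is therefore asymptotically dominated by $M$; hence $D_{d}<M$ in the parameter regime of interest. The main technical obstacle will be the digital-geometry lemma supplying the constant $2\sqrt{3}/3$: the geometric-series summation and the comparison with $M$ are routine, whereas the sharp projection bound requires a careful case analysis over the possible boundary-surface orientations together with a clean accounting of the edge- and vertex-effects produced by axis-aligned cube neighborhoods.
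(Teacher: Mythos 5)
Your decomposition is exactly the paper's: an exact count of $M/d^{3}$ for the first pass, a per-refinement count of boundary-cut blocks at spacing $s_{r-1}=(d/2^{r-1})\varepsilon$ bounded by $2\sqrt{3}S/s_{r-1}^{2}$, and the same geometric-series summation giving $\frac{2\sqrt{3}S}{3\varepsilon^{2}}\bigl(1-\frac{1}{d^{2}}\bigr)$, followed by $1/\varepsilon^{2}=M^{2/3}/L^{2}$. The one place you diverge is the justification of the count $2\sqrt{3}S/s^{2}$, which you defer to an unproven ``digital-geometry'' lemma (projection of the unit normal, Cauchy-type bound $|n_{1}|+|n_{2}|+|n_{3}|\le\sqrt{3}$, plus edge/vertex bookkeeping) and flag as the main obstacle. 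The paper gets this constant with a much simpler packing argument: every block of side $s$ that the boundary surface intersects lies inside the slab obtained by translating the surface by $\pm\sqrt{3}s$ along its normal (the block diagonal), and dividing the slab volume $2\sqrt{3}sS$ by the block volume $s^{3}$ gives at most $2\sqrt{3}S/s^{2}$ such blocks — no orientation case analysis needed, and it also covers your disagreeing-pair count since a disagreement forces the surface to cut the corresponding block. So your plan is sound and its only missing piece can be filled by this one-line argument. Two minor caveats: your substitution yields denominator $L^{2}$, not the $(\varepsilon L)^{2}$ printed in the theorem — but the paper's own summed $D_{r}$ has the same discrepancy, so you have matched the proof rather than the stated formula; and your comparison with the Snake Traversal is only asymptotic (``regime of interest''), whereas the theorem claims $D_{d}<M$ for every integer $d>1$, though the paper itself offers no additional detail on that point either.
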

		\begin{proof}
			The cubes with impure radio environment are distributed along the boundary surface of licensed networks. The side length of a cube is denoted by $\varepsilon$, and we have $M=\left(\frac{L}{\varepsilon}\right)^3$.
			
			The measurement process is illustrated using tree graph in Fig. \ref{fig 8}, the entire measurement process consists of $\log_2{d}+1$ iterations. With $r$ denoting the number of iteration, when $r=1$, the cubes with interval $d$ in 3D space are measured, and there are $D_1=\frac{M}{d^3}$ cubes that need to be measured. When $r>1$, the cubes need to be measured are denoted by the gray cubes as shown in Fig. \ref{fig 8}, and the number of measurements is equal to the number of impure cubes along the boundary surfaces of licensed networks, which can be derived by analyzing a packing problem. Moving the boundary surfaces of licensed networks in the opposite two normal directions with a distance $\frac{\sqrt3\varepsilon d}{2^{\left(r-2\right)}}$ generates a space with volume $\frac{2\sqrt3\varepsilon dS}{2^{\left(r-2\right)}}$ when $r>1$, where $S$ is the area of the boundary surfaces of licensed networks. This volume is divided by the volume  of the cube with volume $\left(\frac{\varepsilon d}{2^{\left(r-2\right)}}\right)^3$ to derive the maximum number of cubes with impure radio parameters $D_r=\frac{2\sqrt3\varepsilon dS/2^{\left(r-2\right)}}{\left(\varepsilon d/2^{\left(r-2\right)}\right)^3}=\frac{2^{\left(2r-3\right)}\sqrt3S}{\left(\varepsilon d\right)^2}$, i.e., the maximum number of measurements. Since the maximum of $r$ is $\log_2{d}+1$, the number of cubes need to be measured using the designed 3D SOM algorithm is
			\begin{equation}
				\begin{aligned}
					{D_{max}}& = {D_1} + {D_2} + ... + {D_r}\\
					& = \frac{M}{{{d^3}}} + \frac{{2\sqrt 3 S}}{{3{{\left( {\varepsilon L} \right)}^2}}}\left( {1 - \frac{1}{{{d^2}}}} \right){M^{\frac{2}{3}}}.
				\end{aligned}
				\tag{37}
			\end{equation}
		\end{proof}
	\begin{figure}[!htbp]
		\centering
		\subfigure[{Cubes to be measured with unit of each axis: m.}]{
			\label{fig 9a}
			\includegraphics[width=0.22\textwidth]{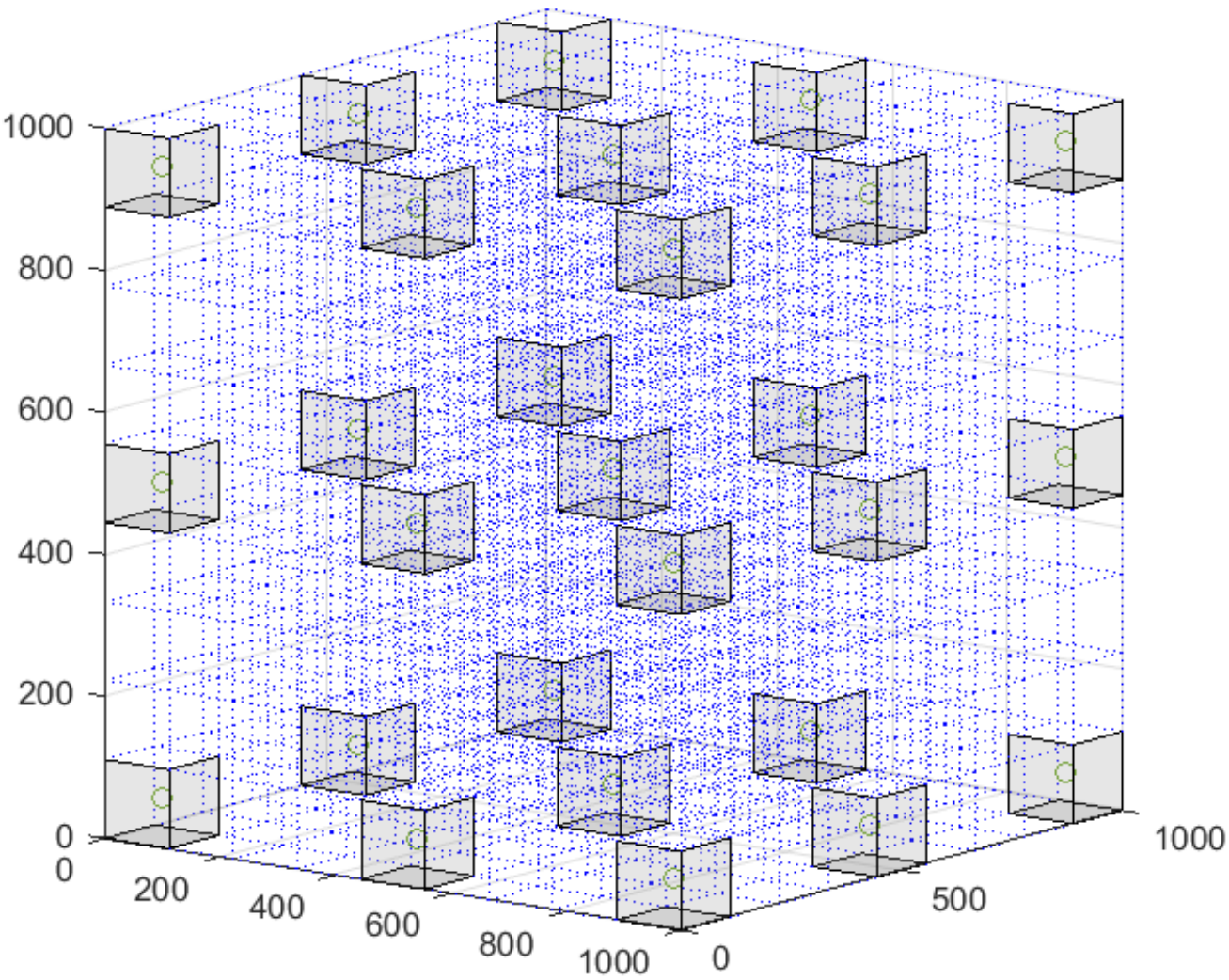}}
		\subfigure[{The flight path of UAV for SOM with unit of each axis: m.}]{
			\label{fig 9b}
			\includegraphics[width=0.22\textwidth]{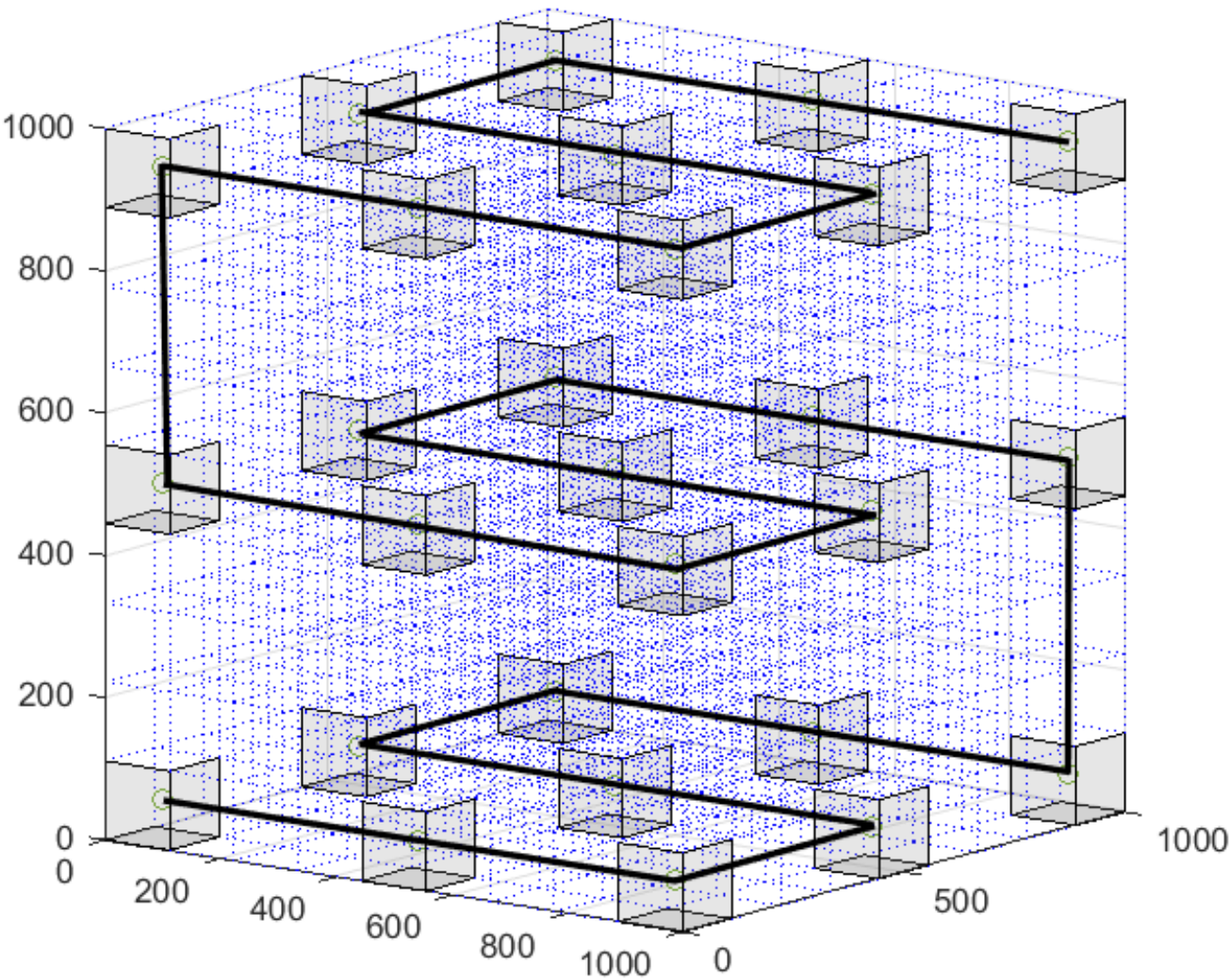}}
		\caption{The measurement process when $r=1$. }
		\label{fig 9}
	\end{figure}
	\begin{figure}[!htbp]
		\centering
		\subfigure[{Cubes to be measured with unit of each axis: m.}]{
			\label{fig 10a}
			\includegraphics[width=0.22\textwidth]{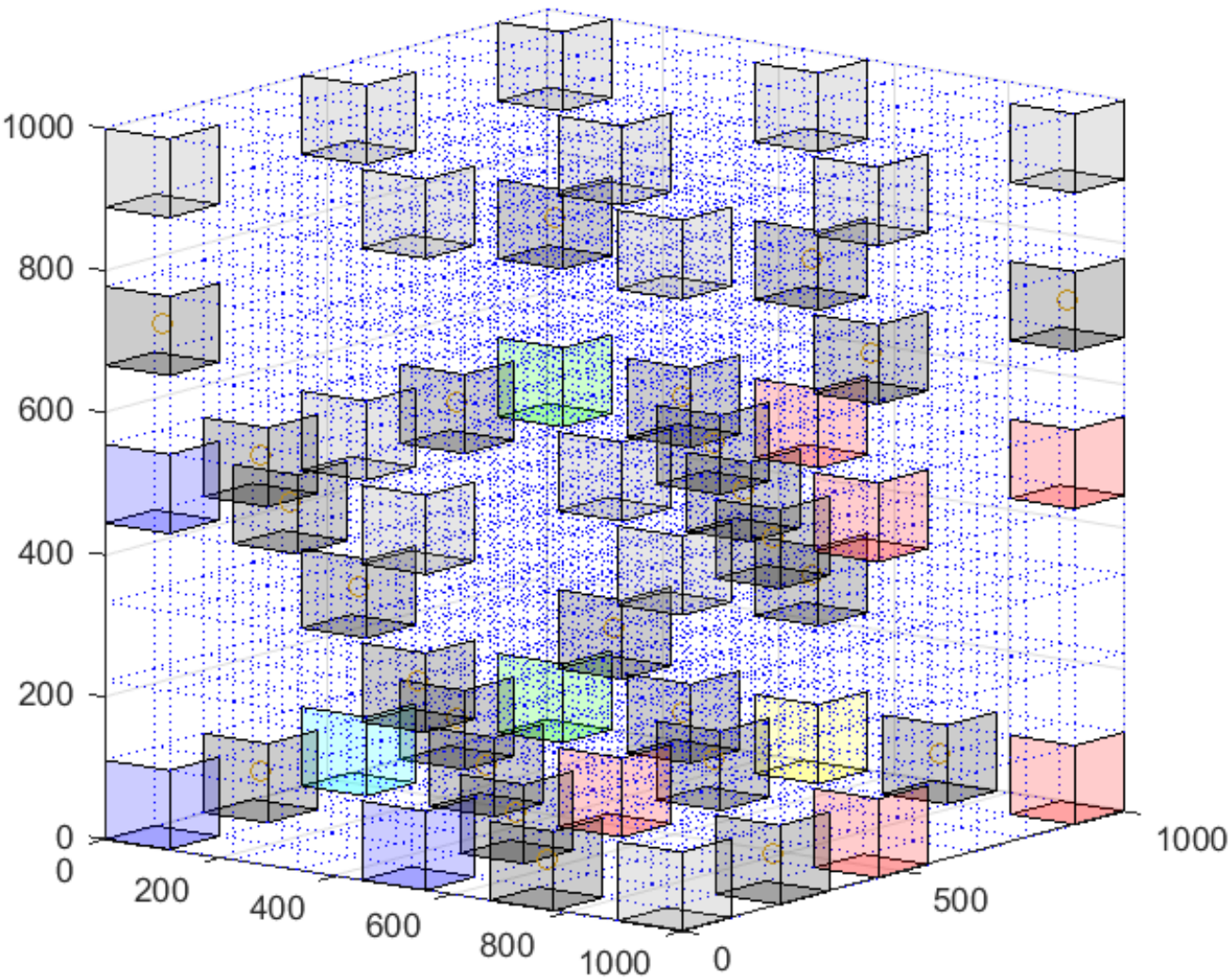}}
		\subfigure[{The flight path of UAV for SOM with unit of each axis: m.}]{
			\label{fig 10b}
			\includegraphics[width=0.22\textwidth]{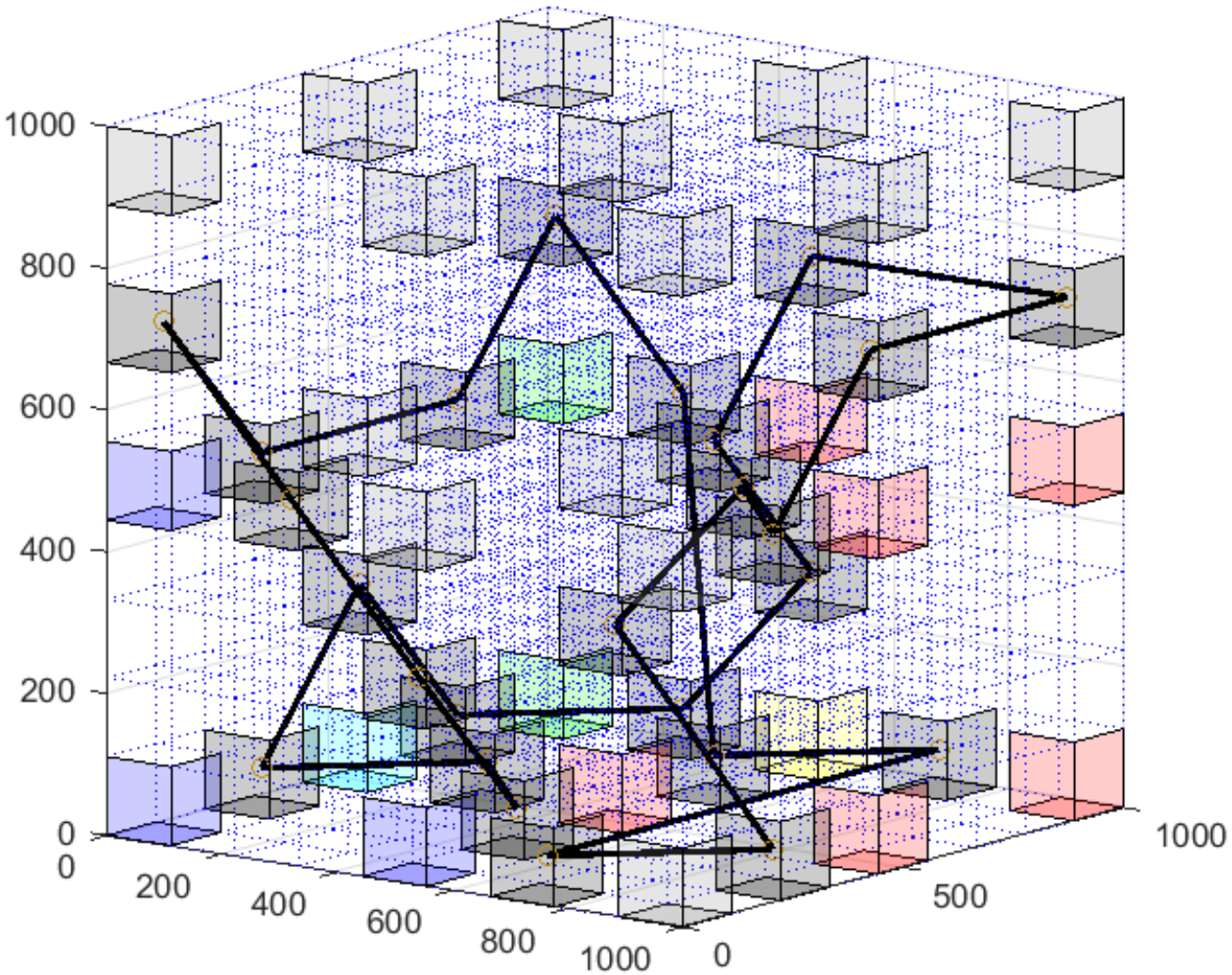}}
		\caption{The measurement process when $r=2$. }
		\label{fig 10}
	\end{figure}
	\begin{figure}[!htbp]
		\centering
		\subfigure[{Cubes to be measured with unit of each axis: m.}]{
			\label{fig 11a}
			\includegraphics[width=0.22\textwidth]{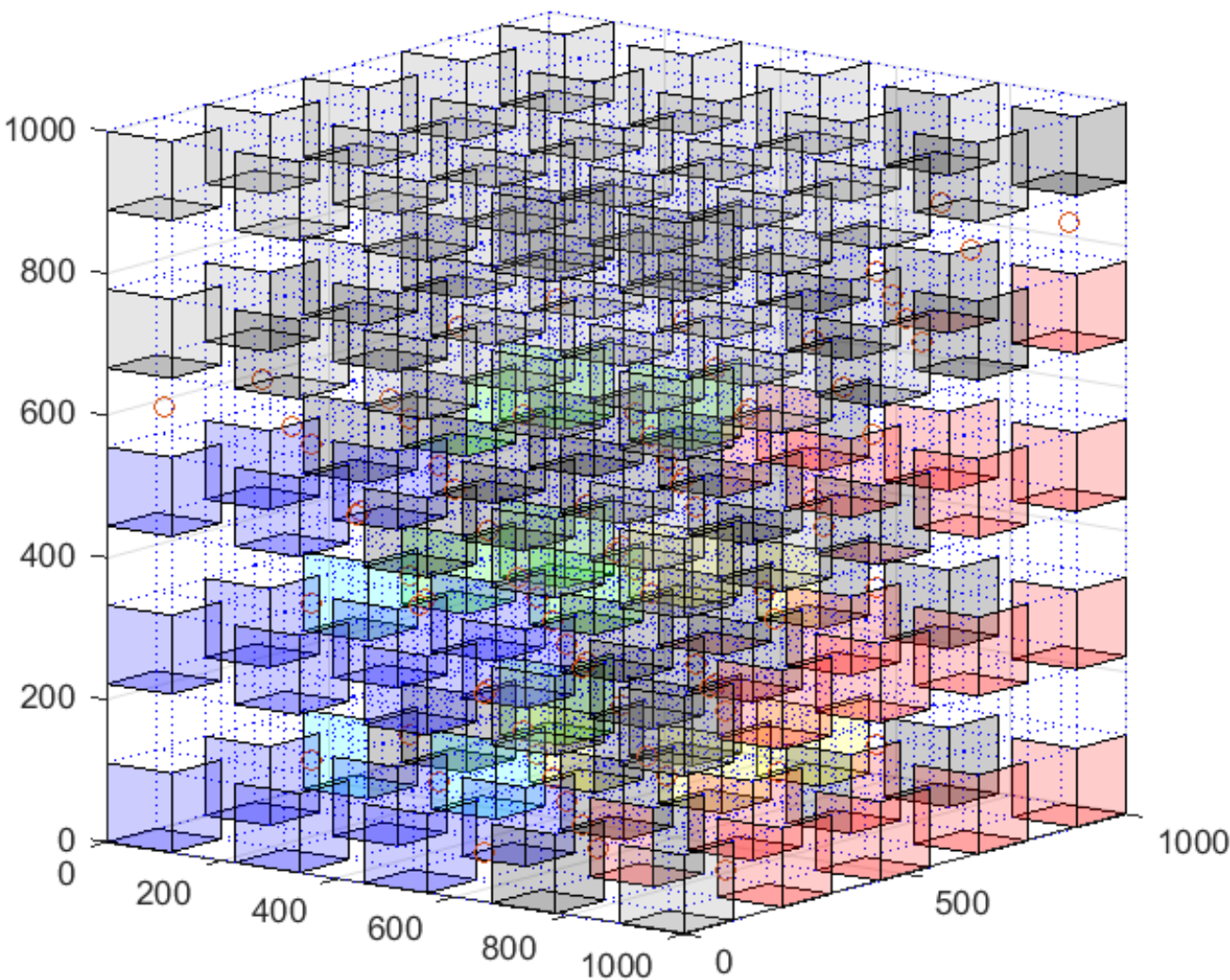}}
		\subfigure[{The flight path of UAV for SOM with unit of each axis: m.}]{
			\label{fig 11b}
			\includegraphics[width=0.22\textwidth]{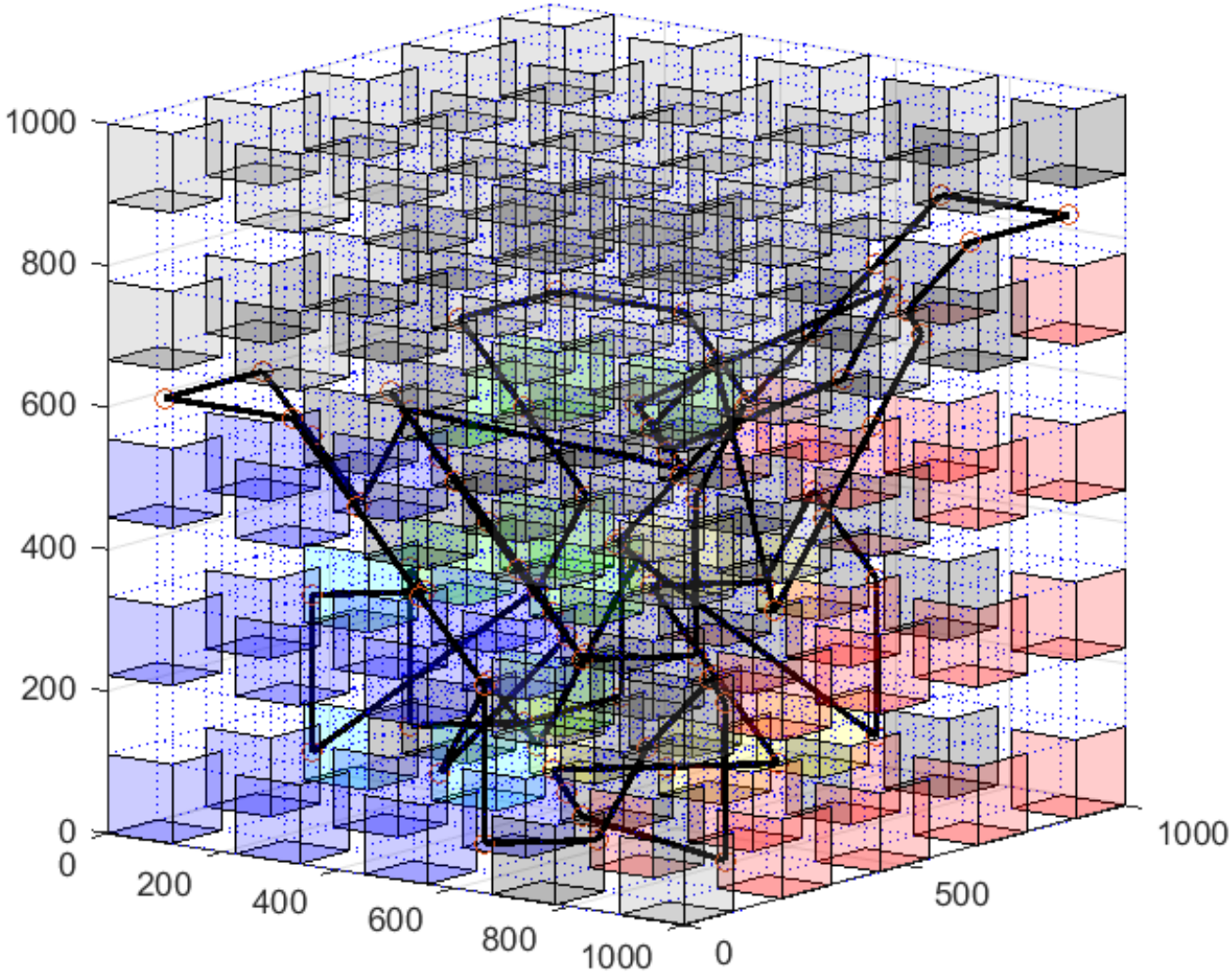}}
		\caption{The measurement process when $r=3$. }
		\label{fig 11}
	\end{figure}
		According to \textbf{Theorem 3}, the proposed 3D SOM algorithm can reduce the number of measurements compared with regular measurement algorithm when the interval $d>1$. Besides, the number of measurements decreases with the increase of $d$.
		
		\section{ Numerical Results}
		
		\begin{figure}[t]
			\centering
			\includegraphics[width=0.33\textheight]{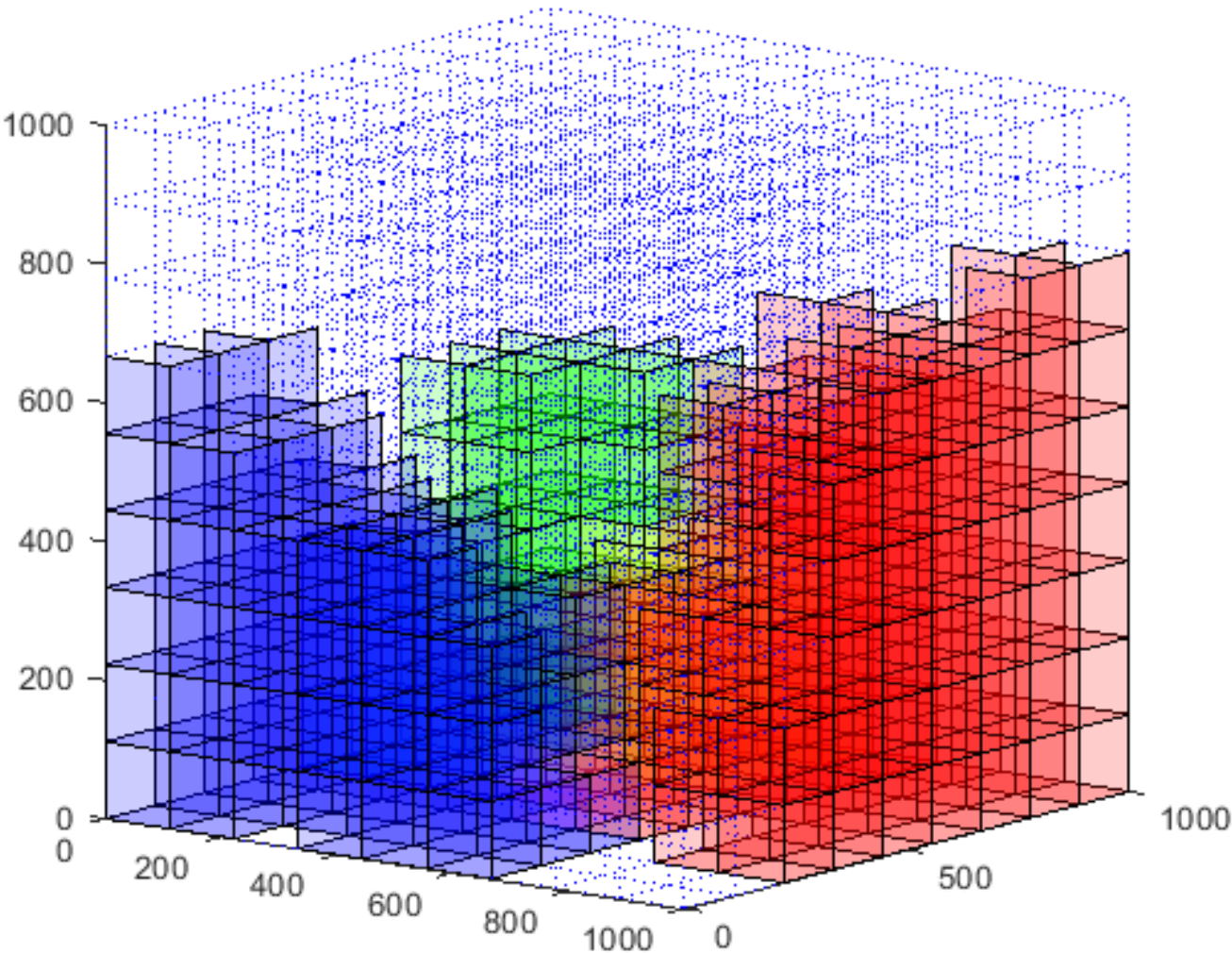}
			\DeclareGraphicsExtensions.
			\caption{3D SOM result with $M=9^3$ cubes with unit of each axis: m.}
			\label{fig 12}
		\end{figure}
		\begin{figure}[t]
			\centering
			\includegraphics[width=0.33\textheight]{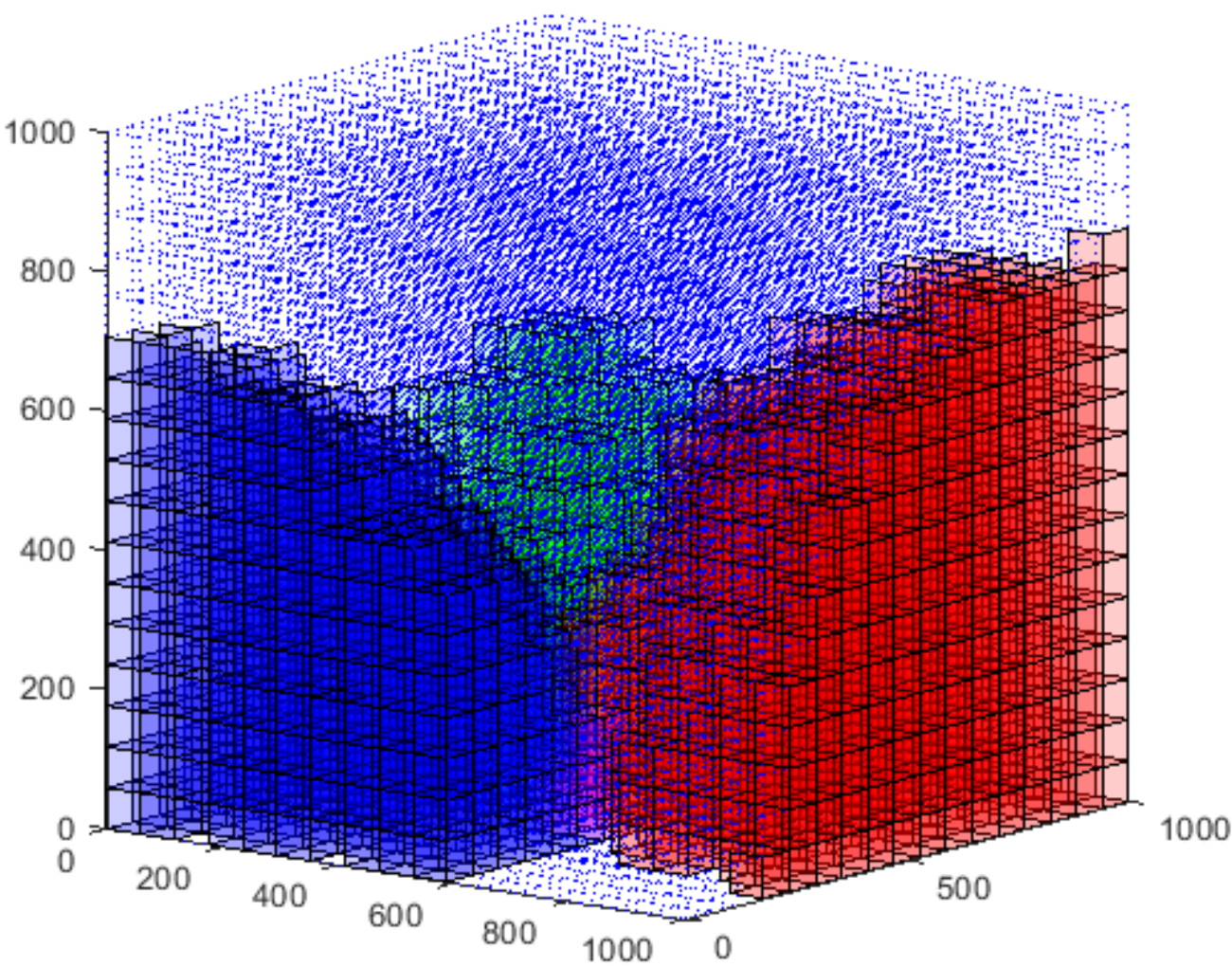}
			\DeclareGraphicsExtensions.
			\caption{3D SOM result with $M=17^3$ cubes with unit of each axis: m.}
			\label{fig 13}
		\end{figure}
		
		The frequency bands below 1 GHz are heavily occupied whereas the frequency bands above 1 GHz with the distance between BSs ranging from 400 m to 1500 m are mostly vacant \cite{ref13}. Therefore, assuming that the measured frequency band is above 1 GHz, and the distance between BSs is about 1000 m in this paper. In addition, the shape of licensed networks' coverage is a sphere, and the location coordinates of the BSs related to licensed network 1, 2 and 3 are set as (0 m, 0 m, 0 m) with radius 700 m, (0 m, 1000 m, 0 m) with radius 600 m, and (1000 m, 1000 m, 0 m) with radius 800 m, respectively. The 3D space is first divided into small cubes and the UAV is applied to measure the radio parameters of the cubes according to the proposed 3D SOM algorithm, which measures the center of a cube once and regards the measurement result as the radio parameter of the cube. Then, we apply Monte Carlo simulation method as well as \eqref{equ 35} to obtain the error of 3D SOM. Finally, the number of measurements and the flight distance of UAV are obtained by finding the shortest measurement path of the cubes using the proposed 3D SOM algorithm. In Figs. \ref{fig 9}-\ref{fig 22}, the numerical results are obtained by measuring the center of a cube using UAV.
		
		Fig. \ref{fig 9}, Fig. \ref{fig 10} and Fig. \ref{fig 11} show the measurement process of 3D SOM algorithm with three licensed networks. We take the number of cubes $M=9^3$ and the interval $d = 4$ as an example, such that three iterations are carried out to complete the entire 3D SOM process. The cubes to be measured for each iteration are marked with circles in Fig. \ref{fig 9a}, Fig. \ref{fig 10a} and Fig. \ref{fig 11a}. The flight path of UAV for SOM using the ACO algorithm for each iteration are marked with black lines in Fig. \ref{fig 9b}, Fig. \ref{fig 10b} and Fig. \ref{fig 11b}. It is observed that when $r=1$, the flight path of UAV is regular. However, with the increasing of $r$, the cubes to be measured are more and more close to boundary surface of licensed networks. Thus the area of the boundary surface of licensed networks has an impact on the number of cubes to be measured and will further impact the length of the flight path of UAV.
		
	    Fig. \ref{fig 12} and Fig. \ref{fig 13} show the results of 3D SOM with $M=9^3$ and $M={17}^3$ cubes using the proposed 3D SOM algorithm, respectively. The results imply that the accuracy of SOM increases with the increase of the number of cubes $M$. However, the overhead of the proposed 3D SOM algorithm will also increase with the increase of $M$.

		\begin{figure}[htbp]
			\includegraphics[width=0.20\textwidth]{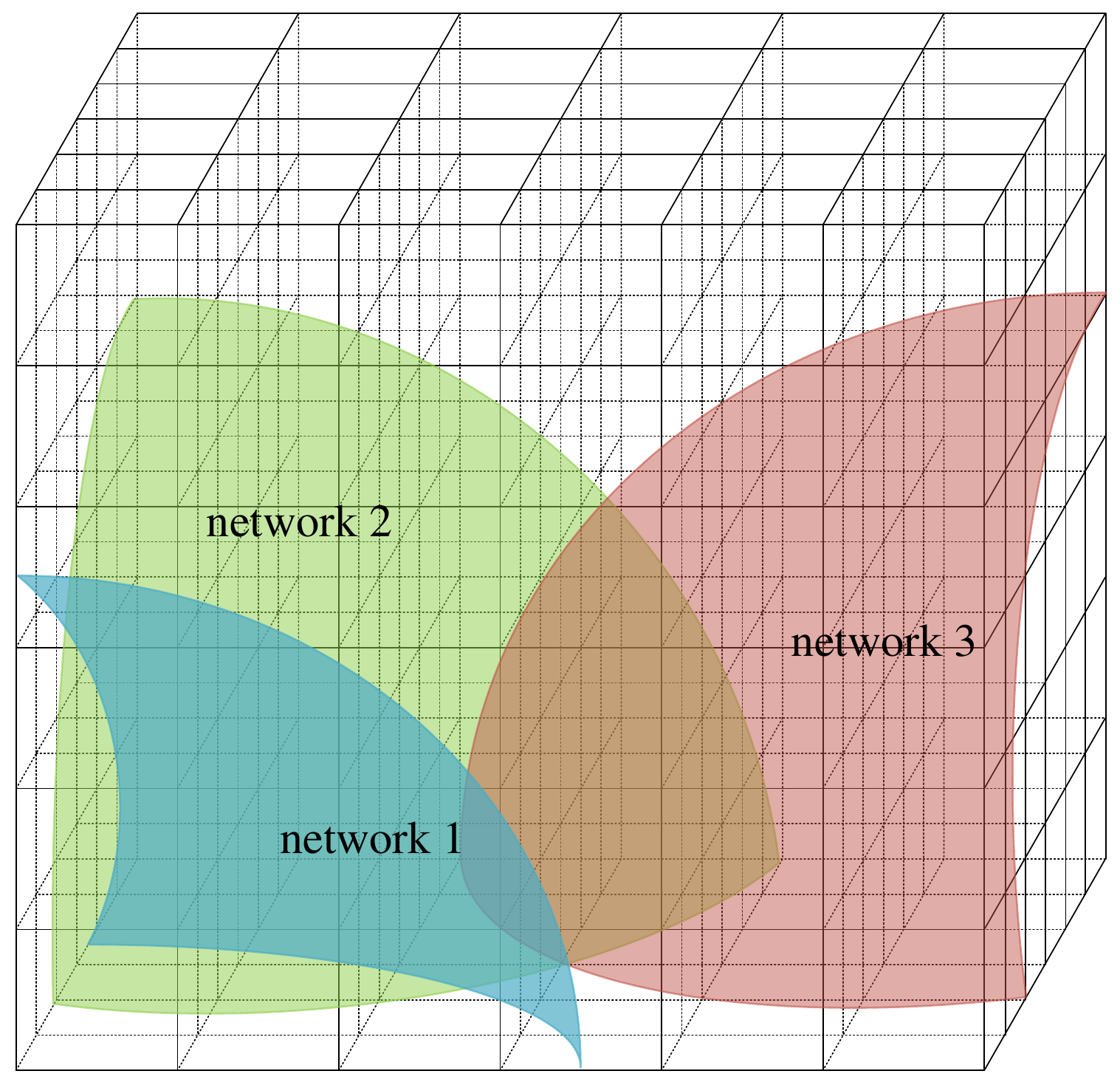}
			\includegraphics[width=0.25\textwidth]{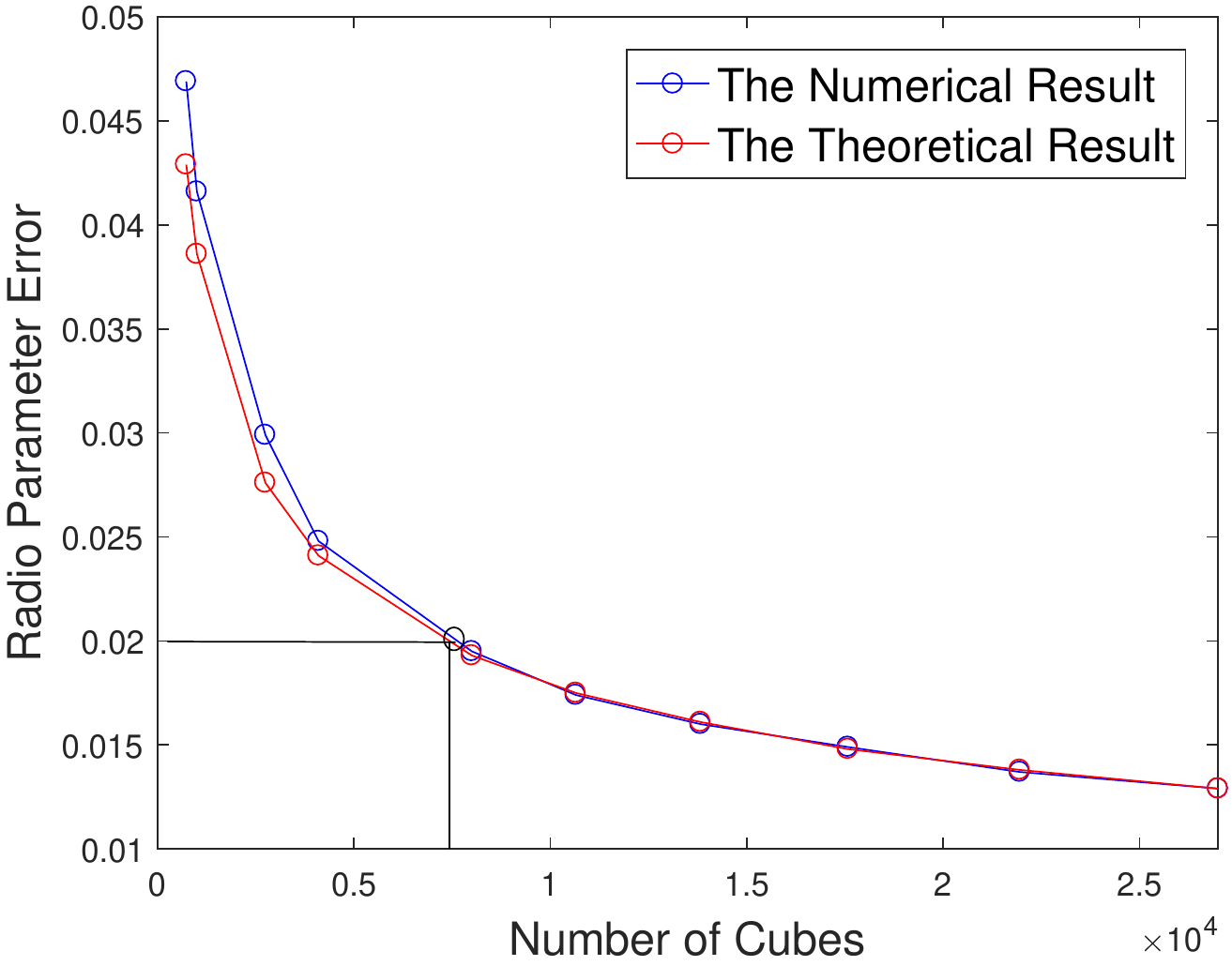}
			\caption{The number of cubes vs. the RPE with three licensed networks.}
			\label{fig 14}
		\end{figure}
		\begin{figure}[htbp]
			\includegraphics[width=0.20\textwidth]{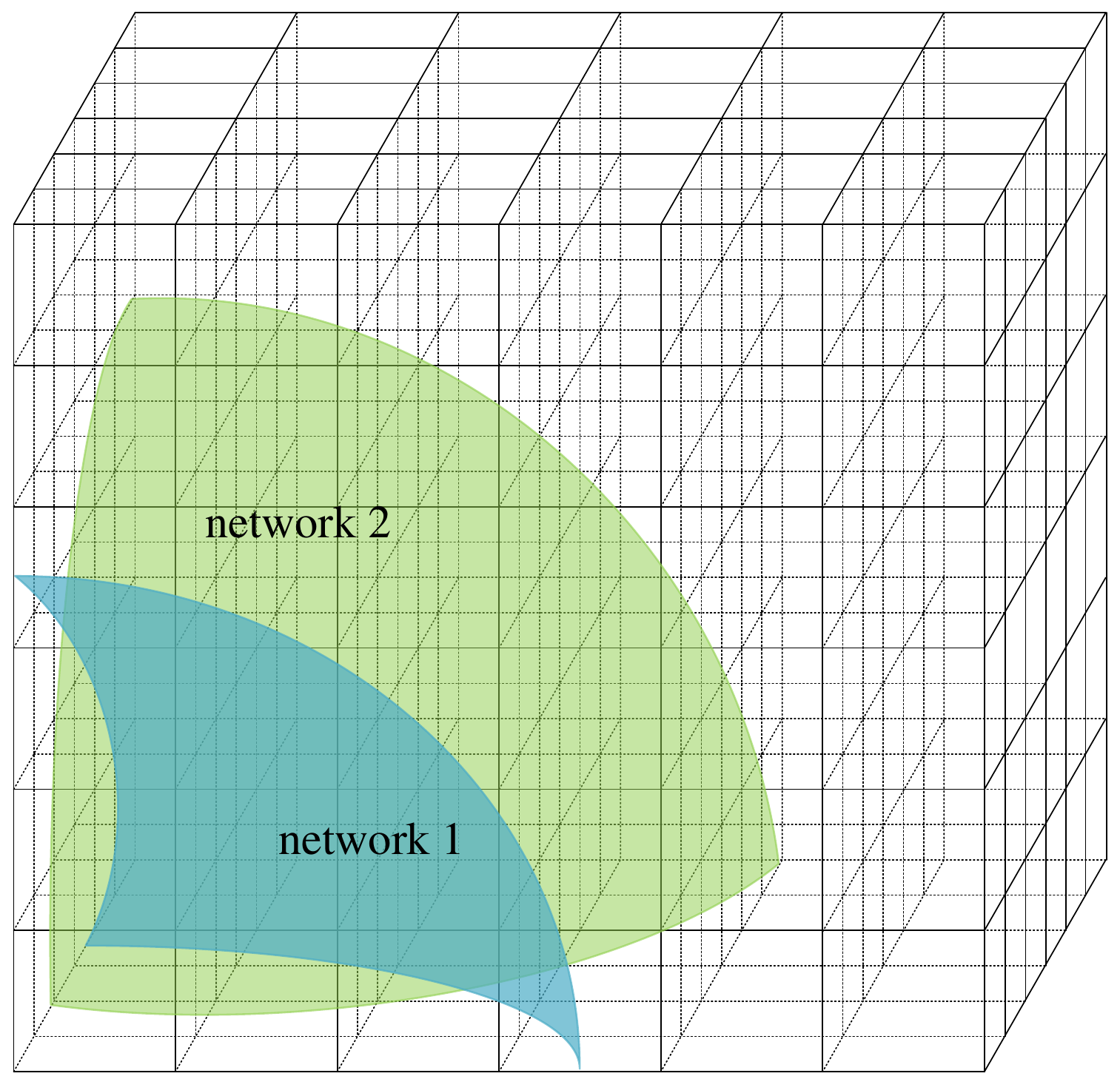}
			\includegraphics[width=0.26\textwidth]{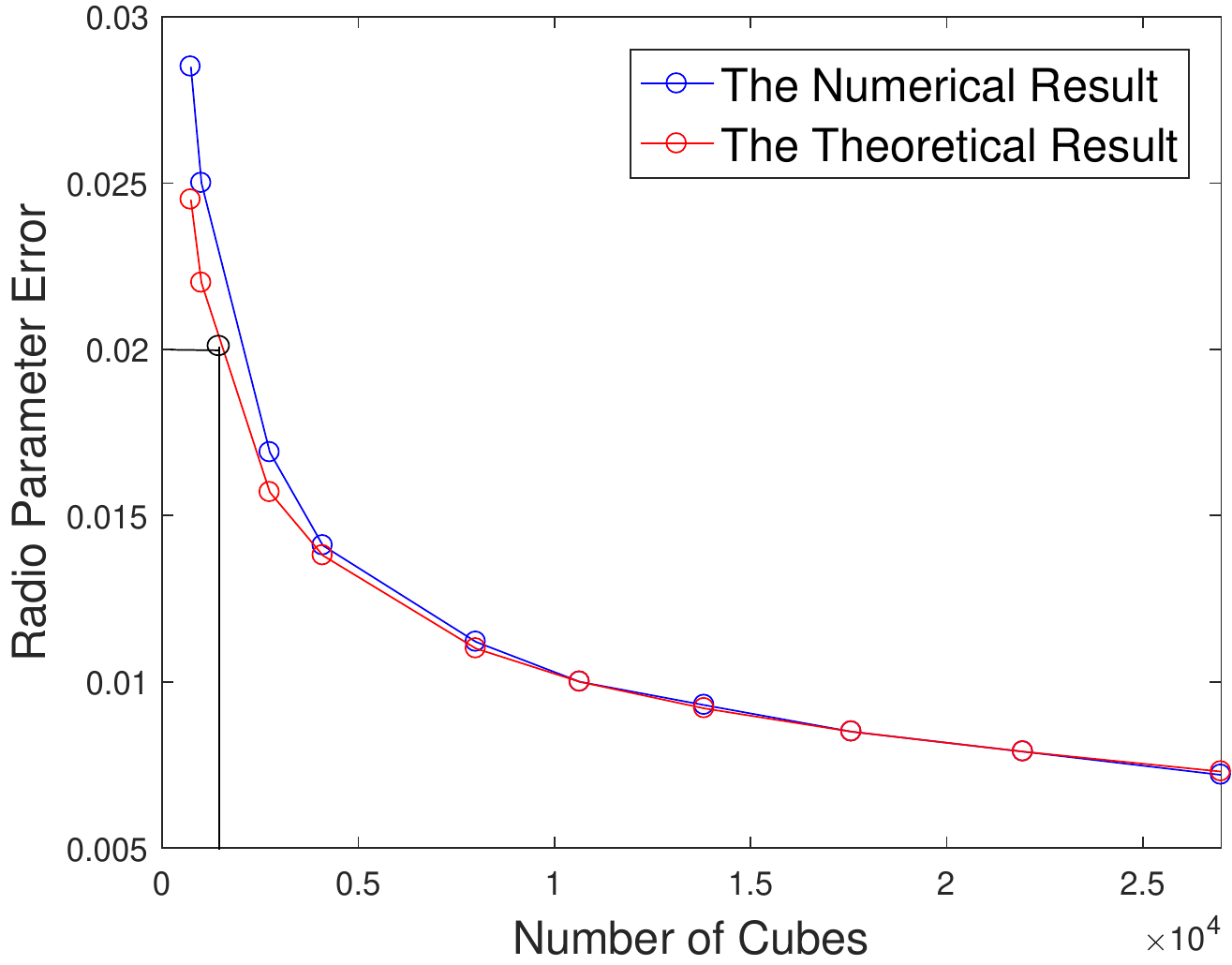}
			\caption{The number of cubes vs. the RPE with two licensed networks.}
			\label{fig 15}
		\end{figure}
		
		The relation between the RPE and the number of cubes is shown in Fig. \ref{fig 14} and Fig. \ref{fig 15} for three and two licensed networks, respectively. With the surface of licensed networks being the sphere, the distribution of the random variables $X = x$, $\Theta = \theta$ and ${\rm A} = \alpha$ can be approximated using uniform distribution. These figures verify the correction of the theoretical results in \textbf{Theorem 2} since the numerical results in Fig. \ref{fig 14} and Fig. \ref{fig 15} are close to the theoretical result calculated using \eqref{equ 31} in \textbf{Theorem 2}. As the number of cubes $M$ increases, the RPE decreases. To achieve the same RPE, the value of $M$ in Fig. \ref{fig 14} is larger than that in Fig. \ref{fig 15}. The essential reason is revealed in \textbf{Theorem 2}, which proves that the RPE is increasing with the increase of the area of the boundary surface of licensed networks $S$. Since the $S$ with three licensed networks is larger than that with two licensed networks, the RPE in Fig. \ref{fig 14} is larger than that in Fig. \ref{fig 15} with the same $M$.
		
		\begin{figure}[t]
			\centering
			\includegraphics[width=0.34\textheight]{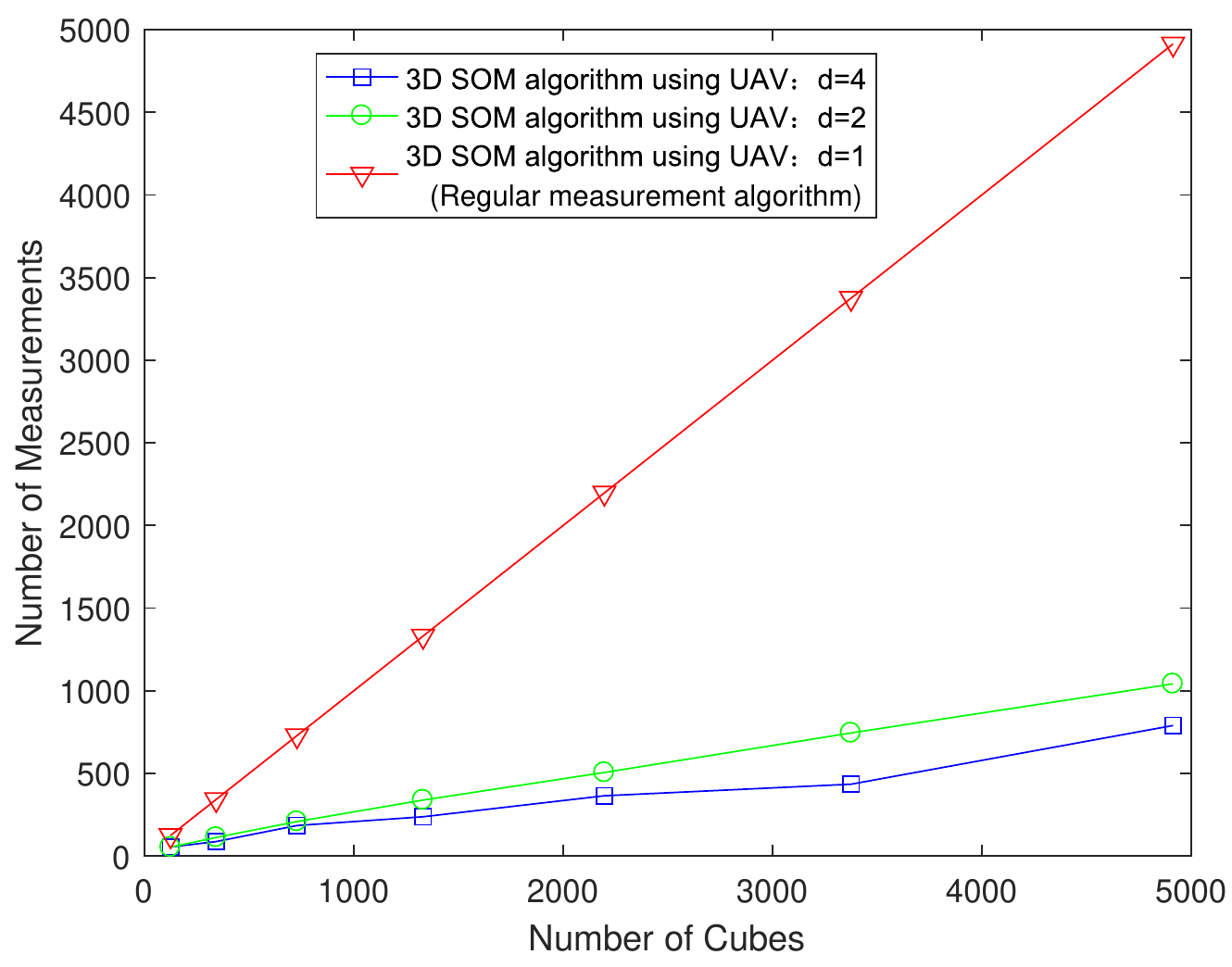}
			\DeclareGraphicsExtensions.
			\caption{The number of measurements vs. the number of cubes.}
			\label{fig 21}
		\end{figure}
		\begin{figure}[t]
			\centering
			\includegraphics[width=0.34\textheight]{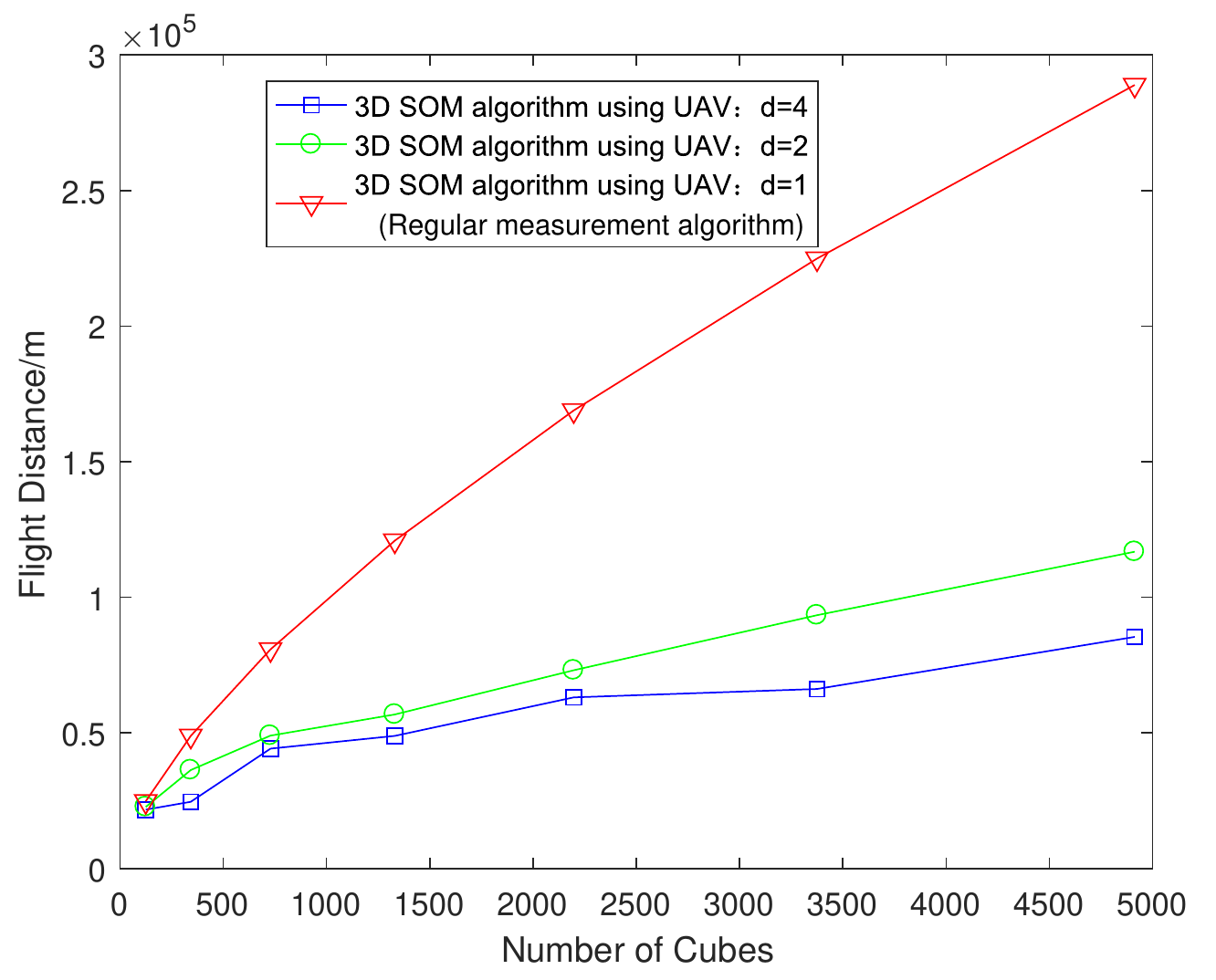}
			\DeclareGraphicsExtensions.
			\caption{The flight distance vs. the number of cubes.}
			\label{fig 22}
		\end{figure}
		\begin{figure}[t]
			\centering
			\includegraphics[width=0.34\textheight]{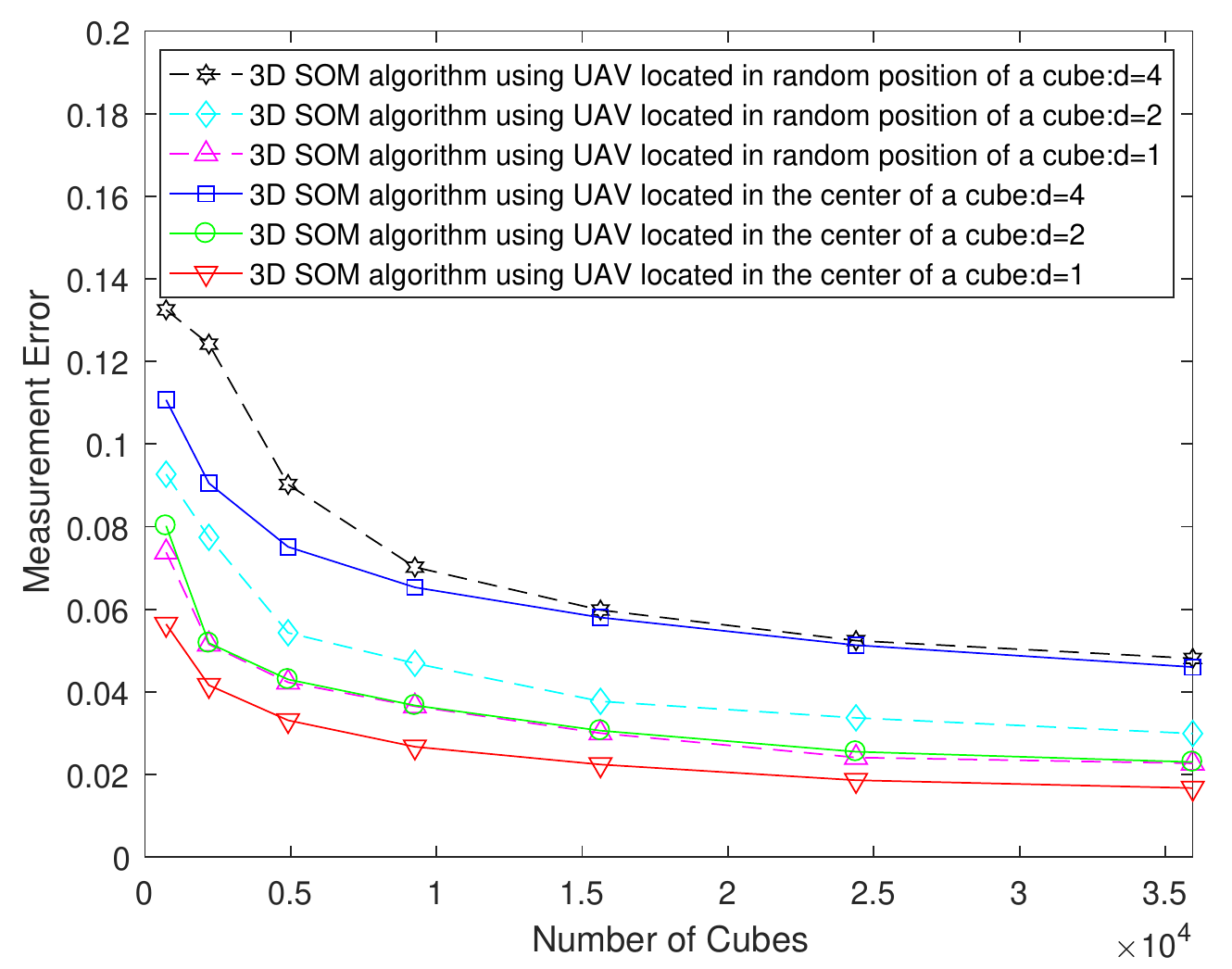}
			\DeclareGraphicsExtensions.
			\caption{The measurement error vs. the number of cubes.}
			\label{fig 23}
		\end{figure}
		
		Fig. \ref{fig 21} shows the relation between the number of measurements and the number of cubes, which reveals that the proposed 3D SOM algorithm can reduce the number of measurements compared with regular measurement algorithm with interval $d=1$. Besides, the number of measurements decreases with increase of the interval $d$, which verifies \textbf{Theorem 3}.
		
		Fig. \ref{fig 22} shows the relation between the flight distance and the number of cubes, which reveals that the 3D SOM algorithm can shorten the flight distance of the UAV compared with regular measurement algorithm with interval $d=1$, and the flight distance decreases with increase of the interval $d$. Therefore, the proposed 3D SOM algorithm reduces the flight time and measurement time as shown in Fig. \ref{fig 21} and Fig. \ref{fig 22}, namely, improves the measurement efficiency, which is expressed as the time including the measurement time and flight time required to obtain the radio parameter of every cube in the entire 3D space. Moreover, the proposed 3D SOM algorithm reduces the energy consumption in UAV-assisted SOM compared with regular measurement algorithm with interval $d=1$. However, the measurement error increases with increase of the interval $d$, which is shown in Fig. \ref{fig 23}. The measurement error is decreasing with the increase of the number of cubes. Hence, there exists a tradeoff between accuracy and efficiency in 3D SOM. Besides, the locations of UAV will also affect the measurement error. As shown in Fig. \ref{fig 23}, the 3D SOM algorithm using the measurement located in the center of a cube can reduce the measurement error compared with the 3D SOM algorithms using random measurement position within a cube.

		\section{Conclusion}
		
		In this paper, we study 3D SOM using UAV. The 3D space is first divided into small cubes and then the performance of 3D SOM is analyzed to study the tradeoff between accuracy and efficiency. Moreover, a fast 3D SOM algorithm is designed combined with UAV path planning. The performance of the proposed 3D SOM algorithm is analyzed. Simulation results have demonstrated the measurement process and results of 3D SOM algorithm, which improves the measurement efficiency by requiring less measurement time and flight time of UAV for satisfactory performance. Besides, we discover that the error of 3D SOM depends on the measurement position of the UAV and the 3D SOM algorithm using the measurement located in the center of a small cube achieves small error. The 3D SOM using cooperative multiple UAVs could obtain higher measurement efficiency, which is the future work of 3D SOM using UAVs.

\end{document}